\newtheorem{lemma}{Lemma}
\newtheorem{theorem}{Theorem}
\newtheorem{corollary}{Corollary}
\begin{document}

\title{Uncertainty Calibration for Counterfactual Propensity Estimation in Recommendation}

\author{Wenbo Hu,~\IEEEmembership{Member,~IEEE,} Xin Sun,~\IEEEmembership{Member,~IEEE,} Qiang Liu,~\IEEEmembership{Member,~IEEE,}\\ Le Wu,~\IEEEmembership{Member,~IEEE,} Liang Wang,~\IEEEmembership{Fellow,~IEEE,}
\thanks{This work was supported by the National Science and Technology Major Project (No. 2021ZD0111802), National Natural Science Foundation of China (No. 62306098),  the Open Projects Program of State Key Laboratory of Multimodal Artificial Intelligence Systems, Funds for the Central Universities (No. JZ2024HGTB0256),
the Open Project of Anhui Provincial Key Laboratory of Multimodal Cognitive Computation, Anhui University (No. MMC202412).}
\thanks{Wenbo Hu and Le Wu are with the School of Computer and Information, Hefei University of Technology, Hefei, China (Email: \{wenbohu,lewu\}@hfut.edu.cn). Xin Sun is with the University of Science and Technology of China (Email: sunxin000@mail.ustc.edu.cn) . Qiang Liu and Liang Wang are with the Institute of Automation, Chinese Academy of Sciences(Email: \{qiang.liu,wangliang\}@nlpr.ia.ac.cn).}
\thanks{The first two authors contributed equally. Corresponding author: Qiang Liu.}
\thanks{{The code and data for reproducing the results presented in this paper are publicly available at: \url{https://github.com/sunxin000/uncertainty4recsys}.}}
}

\markboth{Journal of \LaTeX\ Class Files,~Vol.~14, No.~8, August~2021}%
{Shell \MakeLowercase{\textit{et al.}}: A Sample Article Using IEEEtran.cls for IEEE Journals}


\maketitle

\begin{abstract}
Post-click conversion rate (CVR) is a reliable indicator of online customers' preferences, making it crucial for developing recommender systems. A major challenge in predicting CVR is severe selection bias, arising from users' inherent self-selection behavior and the system's item selection process. To mitigate this issue, the inverse propensity score (IPS) is employed to weight the prediction error of each observed instance. However, current propensity score estimations are unreliable due to the lack of a quality measure. To address this, we evaluate the quality of propensity scores from the perspective of uncertainty calibration, proposing the use of Expected Calibration Error (ECE) as a measure of propensity-score quality, which quantifies the extent to which predicted probabilities are overconfident by assessing the difference between predicted probabilities and actual observed frequencies. Miscalibrated propensity scores can lead to distorted IPS weights, thereby compromising the debiasing process in CVR prediction.
In this paper, we introduce a model-agnostic calibration framework for propensity-based debiasing of CVR predictions. Theoretical analysis on bias and generalization bounds demonstrates the superiority of calibrated propensity estimates over uncalibrated ones. Experiments conducted on the Coat, Yahoo and KuaiRand datasets show improved uncertainty calibration, as evidenced by lower ECE values, leading to enhanced CVR prediction outcomes.
\end{abstract}

\begin{IEEEkeywords}
Post-click conversion rate, inverse propensity score, expected calibrated error, uncertainty calibration.
\end{IEEEkeywords}

\section{Introduction}
The post-click conversion rate (CVR) represents the likelihood of a user consuming an online item after clicking on it. Predicting CVR is essentially a counterfactual problem, as it involves estimating the conversion rates of all user-item pairs under the hypothetical scenario that all items are clicked by all users. However, this scenario contradicts reality due to selection bias. Users freely choose which items to rate, resulting in observed user-item feedback that is not representative of all possible user-item pairs. Consequently, the feedback data is often missing not at random (MNAR)~\cite{guo2021enhanced, dai2022generalized, wang2022escm, zhou2023generalized, liu2022rmt}.

\begin{figure}[htbp]
\centering
\subcaptionbox{Coat Shopping}{\includegraphics[width=.45\columnwidth]{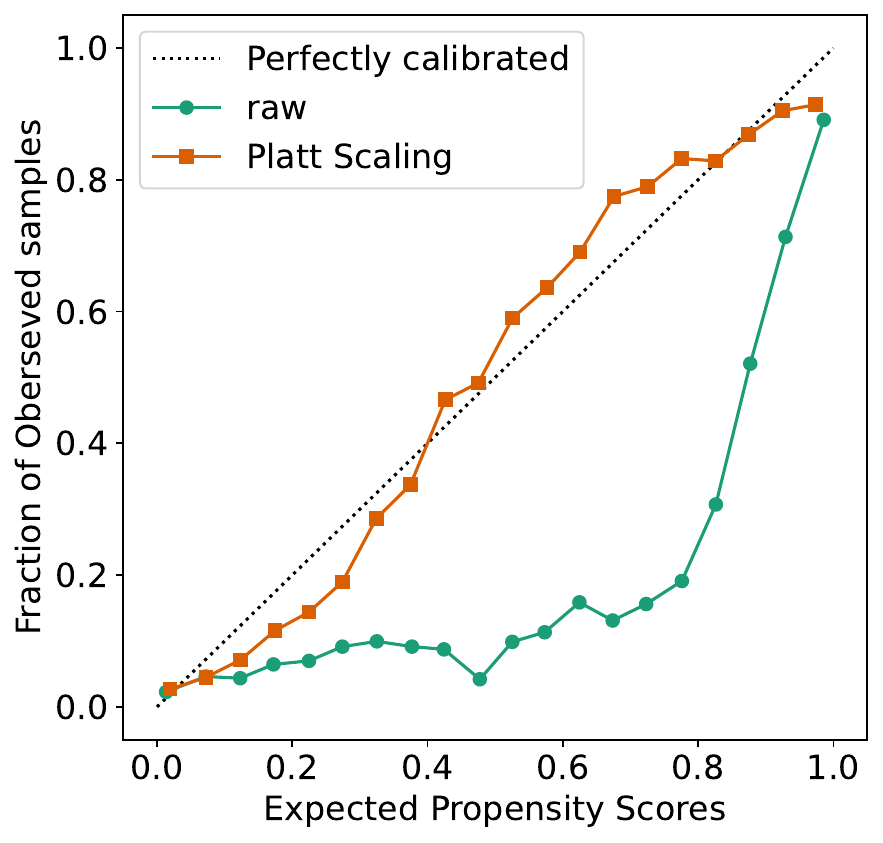}}
\subcaptionbox{{Yahoo R3!}}{\includegraphics[width=.45\columnwidth]{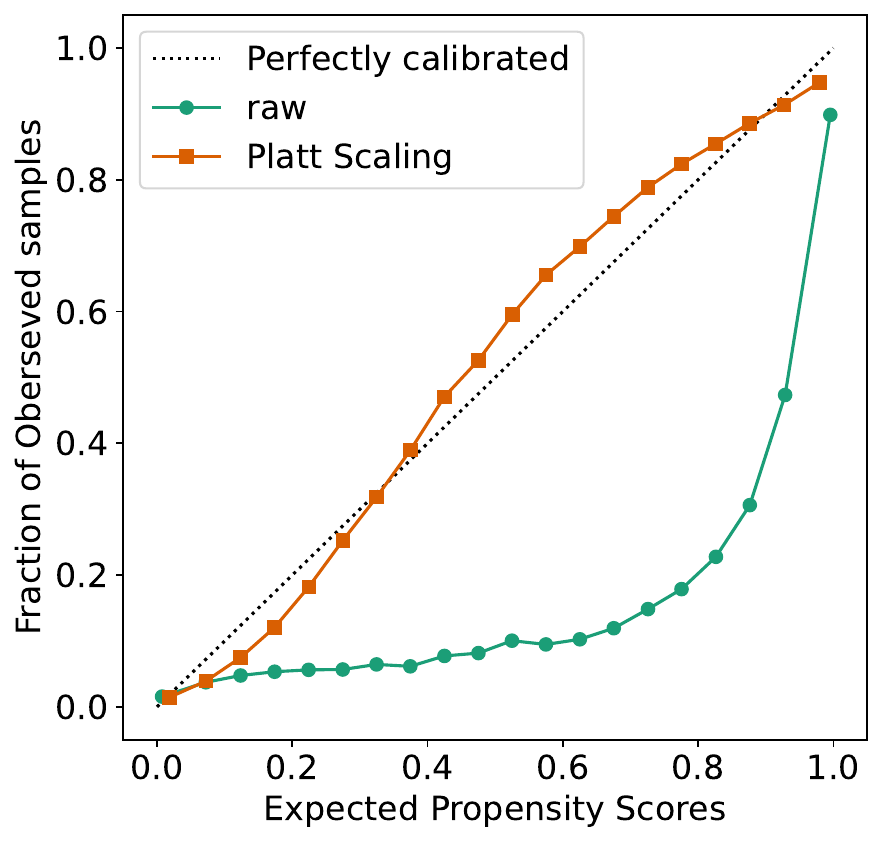}}
\caption{For recommendation with MNAR on the Coat shopping dataset, we use the raw  propensity estimator with and without the platt scaling calibration and give the scatter plot of the expected propensity vs the fraction of observed ratings. The diagonal line is the perfect uncertainty calibration result. As can be seen, the raw propensity estimations are severely miscalibrated.}
\label{fig:intro-miscalibration}
\end{figure}

To address this problem, the inverse propensity score (IPS) approach is employed to handle selection bias~\cite{seaman2013review, little2019statistical}. This approach treats recommendation as an intervention, analogous to treating a patient with a specific drug. In both scenarios, we have only partial knowledge of how certain treatments (items) benefit certain patients (users), with outcomes for most patient-treatment (user-item) pairs remaining unobserved. For recommendations, IPS inversely scores the prediction error of each feedback using the propensity of that feedback~\cite{swaminathan2015self, schnabel2016recommendations}. Doubly robust (DR) learning approaches, which combine IPS and error imputation (EIB) methods, also achieve state-of-the-art performance in debiasing CVR prediction~\cite{wang2019doubly, saito2020doubly, guo2021enhanced, dai2022generalized}. The robustness and accuracy of propensity estimates are crucial for propensity-based debiasing in recommendation systems. Unfortunately, there is no systematic investigation into reliable quality measures for propensity scores. As a result, miscalibrated propensity score estimates are often overlooked, potentially diminishing the effectiveness of debiasing methods.

In machine learning methods widely used in recommendation systems, uncertainty quantification is often poorly characterized, leading to over-confident predictions. This issue is prevalent not only in deep learning models~\cite{guo2017calibration} but also in shallow models, such as logistic regression~\cite{vaicenavicius2019evaluating}. {Both types of models are prone to overconfidence because they are typically optimized to minimize error metrics without explicitly accounting for uncertainty, resulting in predictions that underestimate the true uncertainty in the data~\cite{guo2017calibration}.} The uncertainty of personalized ranking probabilities can be learned through uncertainty calibration methods~\cite{menon2012predicting, kweon2022obtaining} and applied in online advertising systems~\cite{wei2022posterior, xu2022ukd}.

{Calibration in machine learning refers to the degree to which predicted probabilities reflect the true likelihood of an event. A model is considered well-calibrated if, for predictions assigned a probability of $p$, the actual frequency of the event is also $p$.}
The propensity scores are frequently miscalibrated, limiting the effectiveness of IPS, even though IPS has been validated in recommendation systems and other applications. 
As illustrated in Fig.~\ref{fig:intro-miscalibration}, expected propensity scores are not calibrated with the fraction of observed samples, deviating from perfect calibration (the diagonal line). {Please note that the results in Fig.~\ref{fig:intro-miscalibration} also applies to the more recent methods, DR-JL~\cite{wang2019doubly} and MRDR~\cite{guo2021enhanced}}. In terms of uncertainty calibration, expected propensity scores, such as 95\%, should correspond to the same level of observed sample fraction (95\%). The uncertainty originates from inaccurate propensity score predictions, leading to inaccurate recommendations when dealing with MNAR data using miscalibrated propensity scores.


In machine learning methods commonly used in recommendation systems, both deep and shallow models are prone to overconfidence, often underestimating the true uncertainty in the data. This overconfidence stems from optimizing error metrics without explicitly accounting for uncertainty, which can lead to miscalibrated predictions~\cite{guo2017calibration, vaicenavicius2019evaluating}. Propensity scores, crucial for IPS-based debiasing, are particularly susceptible to this issue, as demonstrated in Fig.~\ref{fig:intro-miscalibration}, where the observed miscalibration deviates significantly from perfect uncertainty calibration. Miscalibrated propensity scores result in distorted IPS weights, ultimately compromising the debiasing process and prediction reliability. To address this, we propose using Expected Calibration Error (ECE) as a robust measure of propensity-score quality. By quantifying the degree of miscalibration, ECE reveals the extent to which overconfidence in propensity predictions hampers IPS effectiveness. Lower ECE values indicate better-calibrated scores, enabling more accurate and unbiased CVR predictions. This highlights the critical importance of addressing miscalibration to enhance the reliability and robustness of IPS-based methods in handling MNAR data.


The contributions of this paper are as follows:
\begin{itemize}
  \item  {\textbf{Identification of Propensity Miscalibration}: We reveal a critical issue in current CVR prediction approaches—the miscalibration of propensity scores—and propose \textbf{Expected Calibration Error (ECE)} as a robust metric to assess the reliability of these scores.}
  \item  {\textbf{Uncertainty Calibration Framework}: We introduce a novel, \textbf{model-agnostic} framework for uncertainty calibration of propensity scores, significantly improving their reliability and aligning them more closely with observed data distributions.}
  \item  {\textbf{Enhanced Debiasing Effectiveness}: By addressing miscalibration, we demonstrate how calibrated propensity scores bolster the debiasing performance of inverse propensity score (IPS)-based and doubly robust (DR) learning methods in recommendation systems.}
  \item  {\textbf{Comprehensive Theoretical and Empirical Validation}: We provide rigorous theoretical analysis of bias reduction and generalization bounds, supported by extensive experiments on benchmark datasets (e.g., Coat, Yahoo, and KuaiRand). Results highlight the superiority of calibrated propensity scores in achieving accurate and unbiased CVR predictions, as evidenced by lower ECE values and improved prediction performance.}
\end{itemize}
\section{Preliminaries}
In this section, we introduce the preliminaries of counterfactual propensity estimation and uncertainty quantification. 
Table I in Supplemental Materials describes the main symbols used in this paper.

\subsection{Propensity-based Debiasing Recommendation}
Let $\mathcal{U} = \{u_1, u_2,\dots, u_m\}$ and $\mathcal{I} = \{i_1, i_2, \dots, i_n\}$ be the sets of $m$ users and $n$ items. The set of user-item pairs is denoted as $\mathcal{D} = \mathcal{U} \times \mathcal{I}$. We use $\mathbf{R} \in \{0,1\}^{m\times n}$ to represent the conversion matrix where each entry $r_{u,i}$ indicates an observed conversion label. Let $\hat{\mathbf{R}} \in [0,1]^{m\times n}$ be the predicted conversion rate matrix and each entry $\hat{r}_{u,i} \in [0,1]$ represent the predicted conversion rate, which is obtained by the conversion model $f_\theta$ with parameter $\theta$. Additionally, we denote $o_{u,i}$ as the click event indicator and $\mathcal{O}$ as the click label matrix. We denote the  observed conversion label matrix as $\mathbf{R^{o}} = \mathbf{R} \odot \mathcal{O}$, where $\odot$ is Hadamard product operator. If all conversion labels are available, the prediction errors $\mathbf{E} = \{e_{u,i} | (u, i) \in \mathcal{D}\}$ can be calculated, the ideal loss function is:
\begin{equation}
\label{eqn:ideal}
\mathcal{L}_{ideal}(\mathbf{\hat{R}}, \mathbf{R}) = \frac{1}{|\mathcal{D}|} \sum_{u,i\in \mathcal{D}} e_{u,i},   
\end{equation}
where $e_{u,i}$ is the prediction error and we adopt the cross entropy in this paper. 
We adopt the cross entropy $e_{u,i} = CE(r_{u,i}, \hat{r}_{u,i}) = -r_{u,i}\log \hat{r}_{u,i} - (1-r_{u,i}) \log (1-\hat{r}_{u,i})$. 

In practice, only part of the conversion label are available. The naive estimate of ideal loss function is to  averages the prediction errors of the available items:
\begin{equation}
  \label{eqn:naive}
    \mathcal{L}_{naive}(\mathbf{\hat{R}}, \mathbf{R}) =\frac{1}{|\mathcal{D}|}\sum_{o_{u,i}=1,u,i\in\mathcal{D}}e_{u,i} = \frac{1}{|\mathcal{D}|} \sum_{(u,i)\in \mathcal{D}} o_{u,i}e_{u,i}.
\end{equation}
The naive estimator is biased when the conversion labels are Missing Not At Random which is resulted from the selection biases of the real recommendation system~\cite{marlin2007collaborative}, i.e., 
\begin{equation}
    \mathbb{E}_\mathcal{O} [\mathcal{L}_{naive}(\mathbf{\hat{R}}, \mathbf{R})] \neq \mathcal{L}_{ideal}(\mathbf{\hat{R}}, \mathbf{R}).
\end{equation}

To reduce the selection bias of the naive estimator, the inverse propensity score considers reweighting the error of the observed ratings of the inverse propensity score~\cite{swaminathan2015self,sato2020unbiased}. In CVR prediction task, $p_{u,i}$ represents the probability of a user $u$ clicks an item $i$ and $p_{u,i} = \mathbb{P} (o_{u,i}=1) = \mathbb{E}[o_{u,i}]$, which is also known as click-through rate (CTR) in the CVR prediction task setting. 
Specifically, the $p_{u,i}$  is estimated using a machine learning classifier $g_\phi$, such as naive Bayes. We call the model $g_\phi$ as propensity estimation model. The estimated value of $p_{u,i}$ is denoted as $\hat{p}_{u,i}$. The matrices $\mathcal{P}$ and $\mathcal{\hat{P}}$ represent the propensity score matrix and estimated propensity score matrix, respectively.
With the inverse propensity scores, the prediction error of IPS is obtained via:
\begin{equation}
  \label{eqn:IPS}
    \mathcal{L}_{\textrm{IPS}}(\mathbf{\hat{R}}, \mathbf{R})=\frac{1}{|\mathcal{D}|}\sum_{(u,i)\in\mathcal{D}}\frac{o_{u,i}e_{u,i}}{\hat{p}_{u,i}}.
\end{equation}

A more recent progress, the doubly robust estimator, is to combine the IPS and the error-imputation-based (EIB) estimators via joint learning to have the best of the both worlds~\cite{wang2019doubly, guo2021enhanced}. Given the imputed errors $\mathbf{\hat{E}} = \{\hat{e}_{u,i} | (u, i) \in \mathcal{D}\}$ , its loss function is formulated as:
\begin{equation}
    \mathcal{L}_{\text{DR}}(\mathbf{\hat{R}}, \mathbf{R})=\frac{1}{|\mathcal{D}|}\sum_{u,i\in\mathcal{D}}\left(\hat{e}_{u,i}+\frac{o_{u,i}(e_{u,i}-\hat{e}_{u,i})}{\hat{p}_{u,i}}\right).
    \label{eq: dr}
\end{equation}
Doubly robust joint learning(DR-JL) approach\cite{wang2019doubly} estimates the CVR prediction model $f_\theta$ and error imputation model $\hat{e}_{u,i} = h_\psi(x_{u,i}) $ alternately: given $\hat{\psi}$ , $\theta$ is updated by minimizing Eqn.~\ref{eq: dr}; given $\hat{\theta}$, $\psi$ is updated by minimizing:
\begin{equation}
\mathcal{L}_e^{D R-J L}(\psi)=\sum_{u, i\in \mathcal{D}} \frac{o_{u, i}\left(\hat{e}_{u, i}-e_{u, i}\right)^2}{\hat{p}_{u, i}}
\label{eq: drjl}
\end{equation}

Recently, the more robust doubly robust (MRDR) method~\cite{guo2021enhanced} enhances the robustness of DR-JL by optimizing the variance of the DR estimator with the imputation model. Specifically, MRDR keeps the loss of the CVR prediction model in Eqn.~\ref{eq: dr} unchanged, which replaces the loss of the imputation model in Eqn.~\ref{eq: drjl} with the following loss
\begin{equation}
\mathcal{L}_e^{M R D R}(\theta)=\sum_{(u, i) \in \mathcal{D}} \frac{o_{u, i}\left(\hat{e}_{u, i}-e_{u, i}\right)^2}{\hat{p}_{u, i}} \cdot \frac{1-\hat{p}_{u, i}}{\hat{p}_{u, i}}
\label{eq: mrdr}
\end{equation}
This substitution can help reduce the variance of Eqn.~\ref{eq: dr} and hence get a more robust estimator.

\subsection{Trustworthy Machine Learning and Probability  Uncertainty for Relibility}
Machine learning, particularly deep learning methods, has achieved pervasive success in various domains, including vision, speech, natural language processing, control, and computer Go~\cite{lecun2015deep,silver2016mastering}. Despite their dominant prediction performance across these areas~\cite{lecun2015deep,liu2017multi}, such as computer vision, natural language processing, and recommendation systems, deep learning models often produce overconfident and miscalibrated predictions~\cite{guo2017calibration}. Overconfident predictions can undermine the accuracy, robustness, and reliability of these models.
Therefore, it is imperative to characterize the uncertainty in deep learning models~\cite{kendall2017uncertainties,abdar2021review}. Safety-critical tasks are ubiquitous, including autonomous driving~\cite{leibig2017leveraging}, medical diagnoses~\cite{michelmore2020uncertainty}, weather forecasting~\cite{gneiting2005weather}, load forecasting~\cite{taylor2002neural}, social network analysis~\cite{akcora2019graphboot}, anomaly detection~\cite{xu2024calibrated}, and traffic flow forecasting~\cite{qian2023towards}. In these real-world application scenarios, diverse probabilistic uncertainties in model predictions arise from measurement noise, external changes, data missingness, etc. This necessitates that deep learning models not only produce accurate predictions but also provide insights into the reliability of these predictions in terms of uncertainty.

In machine learning, there are two types of uncertainty: \emph{aleatoric} uncertainty and \emph{epistemic} uncertainty (also known as data uncertainty and model uncertainty)~\cite{kendall2017uncertainties}. Aleatoric uncertainty captures the inherent noise in the data, which may arise from sources such as sensor noise or motion noise. Epistemic uncertainty, on the other hand, pertains to the uncertainty in the model parameters and structure. It can be fully captured given sufficient data. In many scenarios, epistemic uncertainty is commonly referred to as model uncertainty.

\subsection{Uncertainty Calibration for Deep Learning}
To formalize, the propensity score is well-calibrated if it equals the correctness ratio of the available conversion labels~\cite{kull2017beta}. For instance, if the propensity estimation model $g_\phi$ outputs 100 predictions, each with a confidence (i.e., uncalibrated propensity score) of 0.95, then 95\% of the conversion labels are expected to be available. We define perfect calibration of propensity estimation as:

\begin{equation}
    \mathbb{P}(o = 1 | \hat{p} = p) = p, \quad \forall p \in [0,1],
\end{equation}
where $\hat{p}$ is the output of $g_\phi$. Miscalibration can be measured by the Expected Calibration Error (ECE), which is the expectation of the coverage probability difference of the prediction intervals. In practice, we partition propensity predictions into $M$ bins of equal width and calculate the weighted sum of all bins via:
\begin{equation}
\label{eqn:ece}
\mathrm{ECE(g_\phi)}=\sum_{m=1}^M \frac{\left|B_m\right|}{n}\left|\operatorname{freq}\left(B_m\right)-\operatorname{conf}\left(B_m\right)\right|,
\end{equation}
where $n$ is the number of samples and $B_m$ is the set of indices of samples whose propensity prediction falls into the interval $I_m = (\frac{m-1}{M}, \frac{m}{M}]$. $\operatorname{conf}(B_m)$ and $\operatorname{freq}(B_m)$ are defined as:
\begin{gather}
    \operatorname{conf}\left(B_m\right) = \frac{1}{|B_m|} \sum_{u, i \in B_m} \hat{p}_{u,i} \\
    \operatorname{freq}\left(B_m\right) = \frac{1}{|B_m|} \sum_{u, i \in B_m} \mathbf{1}(o_{u,i} = 1),
\end{gather}


Regarding calibration methodologies, one effective approach is Bayesian generative modeling, with representative models including Bayesian neural networks and deep Gaussian processes~\cite{wilson2020bayesian,wang2020survey}. Bayesian neural networks are generally computationally expensive to train, so approximate methods have been developed, such as MC-Dropout~\cite{kendall2017uncertainties} and deep ensembles~\cite{lakshminarayanan2017simple}. Alternatively, uncertainties can be obtained from the calibration of inaccurate uncertainties. Methods employing scaling and binning for calibration are used for both classification and regression models~\cite{platt1999probabilistic,niculescu2005predicting,guo2017calibration,kuleshov2018accurate,cui2020calibrated,he2023investigating,liu2023deep,li2024conformalized,cui2024sde,chen2024unveiling}. An additional advantage of calibration methods is their model-agnostic nature, making them applicable to any IPS-based model.
{However, the joint modeling of the model calibration and the base model, namely the CVR model in this paper, might bring synergy between the two tasks. The possible way includes the fully Bayesian generative modeling of the two parts or transfrom the uncertainty calibration as a loss regularization term in the CVR modeling, where both faces several challenges for modeling and training.}

\section{Uncertainty Calibration for Propensity Estimation}
In this section, we present our approach to counterfactual propensity estimation with uncertainty calibration. We also provide a theoretical guarantee for our uncertainty calibration model in recommendation systems with Missing Not At Random (MNAR) data.

\subsection{Propensity Estimation Procedure}
The propensity probability $p$ is critical for the inverse propensity score, which ensures the unbiasedness of the IPS estimator when the inverse propensity score is accurate~\cite{vermeulen2015bias}. Propensities are learned using a machine learning model $g_\phi: x_{u,i} \rightarrow p_{u,i}$, where $p_{u,i} \in [0, 1]$. This model can be naive Bayes, logistic regression, or deep neural networks. We utilize neural networks to fit $g_\phi$. The objective is to find model parameters $\phi$ that maximize $P(\mathcal{O}|X, \phi)$, where $x_{u,i}$ is the vector encoding all observable information about a user-item pair and $X$ is the set of such vectors. The loss function is given by:
\begin{equation}
\label{eqn:ctr}
    \mathcal{L}_{g_\phi} = -\sum_{(u,i) \in \mathcal{D}} [o_{u,i} \cdot \log \hat{p}_{u,i} + (1 - o_{u,i})\cdot \log (1-\hat{p}_{u,i})] 
\end{equation}
\subsection{Uncertainty Calibration for Propensity Estimation}
As illustrated in Fig.~\ref{fig:intro-miscalibration}, raw propensity estimation models are generally miscalibrated, often producing overconfident probability predictions.
 { The reason of the overconfident and miscalibrated propensity estimation models is that they are typically optimized to minimize error metrics without explicitly accounting for uncertainty, which shares the same reason of the overconfidence of the conventional deep models.}
To reduce biases and achieve calibrated propensity scores, we consider a model-agnostic uncertainty calibration $q$ in conjunction with the propensity learning model $g_\phi$. Specifically, two model-agnostic methods are considered for propensity probability calibration: 1) uncertainty probability quantification and 2) post-processing uncertainty calibration.

\subsubsection{Uncertainty Probability Quantification for Propensity scores}
The uncertainty probability quantification considers a generative probability quantification model $q(P|\Theta)$, where $\Theta$ represents the model parameters.

{Due to the challenges of performing exacting inference and its high computational cost associated with fully Bayesian models}, we propose two approximate uncertainty quantification methods for propensity estimation: 1) Monte Carlo Dropout~\cite{gal2016dropout} , 2) deep ensembles~\cite{lakshminarayanan2017simple} and 3) dual focal loss~\cite{tao2023dual}.

\textbf{Monte Carlo (MC) Dropout} involves randomly deactivating neurons during testing in the originally trained deep neural network. Multiple samples ($T$) are taken to produce an approximate posterior distribution through model averaging:
\begin{equation}\label{eqn:model_average}
  q(p|x,\Theta)\sim\frac{1}{T}\sum_{t}^{T} q_{t}(p|x, g_\phi(x), \Theta_{t}).
\end{equation}

\textbf{Deep Ensembles} involve training multiple model replicas with different random initializations, without interactions during training. The approximate propensity probability distribution is obtained by combining and averaging the replicas as shown in Eqn.~\ref{eqn:model_average}. Compared to MC-Dropout, deep ensembles tend to perform better because the model ensembles learn distinct model distributions, whereas MC-Dropout only varies during the testing stage. However, deep ensembles are generally more computationally expensive since the models are trained multiple times.

 {\textbf{Dual Focal Loss}\cite{tao2023dual} not only considers the ground truth logit, but also take into account the highest logit ranked after the ground truth logit. By maximizing the gap between these two logits, our dual focal loss can achieve a better balance between over-confidence and under-confidence. } 

\subsubsection{Post-processing Calibration for Propensity scores}
In addition to direct uncertainty quantification, post-processing calibration can be applied to derive accurate predictive uncertainties from inaccurate softmax probabilities (or other model output probabilities)~\cite{platt1999probabilistic, guo2017calibration}.

\textbf{Platt Scaling} adjusts the original propensity outputs to learn accurate inverse propensities via:
\begin{equation}
    \label{eqn:platt}
    q(g_\phi(x))=\sigma(b\cdot g_\phi(x)+c),
\end{equation}
where $g_\phi(x)$ represents the original propensity outputs, $\sigma$ is the sigmoid function, and $b, c$ are learnable parameters of the sigmoid function~\cite{platt1999probabilistic}. { With the goal of achieving better alignment between predicted and true probabilities, parameters $b$ and $c$ are optimized using a negative log-likelihood (NLL) objective on a held-out calibration dataset. This process enables the model to learn a calibration mapping that minimizes overconfidence or underconfidence in predictions, thereby enhancing overall calibration performance.} Platt scaling is equivalent to class-conditional Gaussian likelihoods with the same variance. For multi-class classification, Platt scaling can be augmented with a temperature parameter to soften the softmax output, known as temperature scaling~\cite{guo2017calibration}. In this paper, we employ Platt scaling for the NMAR binary setting.
\begin{algorithm}
\caption{Uncertainty calibration for IPS in CVR prediction task}\label{alg:cap}

\KwInput{$X$: set of item-user features, $\mathcal{O}$: click label matrix, $\mathbf{R^o}$: observed conversion label matrix}
\KwOutput{$\theta$}
Initialize the parameter $\phi,\theta$\;
\For{number of steps for training propensity estimation model $g_\phi$}
{
Sample a batch from $X$ and $\mathcal{O}$\;
Update $\phi$ by descending along the gradient $\nabla_\phi \mathcal{L}_{g_\phi}(\phi)$\;
}

\If{Uncertainty Quantification is used}
{
    Obtain multiple model ensemble/non-ensemble model using Eqn.~\ref{eqn:model_average}\;

}
\ElseIf{Post-processing Calibration is used}
{
    Calibrating the overconfident predicts to calibrated ones using Eqn.~\ref{eqn:platt}\;
}
Output propensity scores $\mathcal{\hat{P}}$ using $g_\phi$ for observed samples\;
\For{number of steps training the CVR prediction model $f_\theta$}
{
Sample a batch from $\mathbf{R^o}$ and $\mathcal{\hat{P}}$\;
Update $\theta$ by descending along the gradient $\nabla_\theta \mathcal{L}_{\textrm{IPS}}(\theta)$\;
}

\end{algorithm}

With the calibrated propensity scores, we can train the propensity-based CVR prediction debiasing model in two steps: first, train the propensity estimation model $g_\phi$, obtain the calibrated propensity scores, and then train the CVR prediction model $f_\theta$ using these inverse calibrated propensities. This process is detailed in Algorithm~\ref{alg:cap}. {Please note that, if the calibration method is Dual Focal loss, the loss gradient in line 4 would consider the added dual focal loss and no other calibration step (line 5-8) is needed. }

{Our proposed method is built upon but differs from existing calibration approaches in several key ways. First, it is model-agnostic, enabling application across various propensity-based models without altering their underlying architectures. Second, unlike traditional calibration methods focused on general classification tasks, our approach specifically addresses propensity score calibration for post-click conversion rate (CVR) prediction, tackling the selection bias inherent in recommendation systems. Third, it uniquely integrates with debiasing techniques such as Inverse Propensity Scoring (IPS) and Doubly Robust (DR) learning, enhancing their effectiveness by resolving miscalibrated propensity scores.}

{Our method is built upon the exsting calibration approachs but stands out by being model-agnostic, specifically targeting propensity score calibration for CVR prediction to address selection bias in recommendation systems. Unlike existing methods, it integrates with debiasing techniques like IPS and DR learning, improving their effectiveness. Additionally, it prioritizes Expected Calibration Error (ECE) as a key metric, offering a focused evaluation of calibration quality in the context of the counterfactual propensity estimation in recommendation.}

\subsubsection{Computational Complexity Analysis}
\label{sec:computational complexity}
The computational complexity of the calibration methods for propensity score estimation varies significantly across techniques. The choice of calibration method greatly impacts computational costs. Deep ensembles are likely the most computationally expensive due to the necessity of multiple training cycles, followed by Monte Carlo Dropout, which scales with the number of samples. Post-processing calibration methods typically involve lighter computations on the outputs of an existing model. Below is a comprehensive analysis of each method.
\begin{itemize}
    \item {Monte Carlo Dropout involves sampling the model output multiple times (denoted as $T$) with randomly deactivated neurons. Each sample incurs a forward pass through the neural network, thus making the computational cost proportional to $T$ times the cost of a single forward pass. The complexity is therefore $O(T \times C)$ where $C$ represents the computational cost of one forward pass through the network.}
    \item {Deep Ensembles, on the other hand, requires training multiple independent models from scratch with different initializations. Assuming each model has a training complexity of $O(M)$, where $M$ represents the training complexity of one model (typically including several epochs and forward-backward passes), and there are $N$ such models, the total computational cost would be $O(N \times M)$. The cost can be substantially higher than Monte Carlo Dropout, especially if $N$ and the complexity of individual model training are large. To mitigate the high computational demand of traditional deep ensembles, the BatchEnsemble method can be incorporated, which shares parameters across different models in the ensemble, thereby reducing both memory usage and computational overhead while preserving model diversity~\cite{wen2019batchensemble}.}
    \item {Post-Processing Calibration (e.g., Platt Scaling) involves adjusting the outputs of an already trained model using additional parameters (like $b$ and $c$ in Platt scaling). The primary computational expense here is the forward pass to compute $g_\phi(x)$ and the subsequent optimization to learn the calibration parameters. This can generally be much less computationally intensive compared to the previous methods, as it typically involves simpler operations over the model’s outputs and potentially fewer parameters to optimize.}
    {Dual Focal Loss optimizes both the ground truth logit and the highest competing logit by maximizing the gap between them. Since it modifies the loss function during training, the computational complexity is comparable to standard training with additional gradient computations to handle the second logit. It remains computationally less expensive than MC Dropout or Deep Ensembles since it does not require multiple forward passes or model ensembles.}
\end{itemize}

\subsection{Theoretical Analysis of Uncertainty Calibration using Expected Calibration Errors}

{The miscalibration of propensity scores, driven by overconfidence in both deep and shallow models, distorts IPS weights and hampers CVR predictions. By using Expected Calibration Error (ECE) to measure miscalibration, we demonstrate that reducing ECE improves propensity reliability and enhances prediction accuracy. By calibrating the propensity uncertainty, the Expected Calibration Error can be reduced, leading to improved CVR predictions. We now provide a theoretical analysis of the proposed method.}

We first derive the bias of the IPS estimator in Eqn.~\ref{eqn:IPS}:
\begin{lemma}
    \label{lemma:bias}
    Given inverse propensities of all user-item pairs $\hat{p}_{u,i}$, the bias of the IPS estimator in Eqn.~\ref{eqn:IPS} and the propensity bias are:
  \begin{eqnarray}
    \mathcal{E}_{\text{IPS}}=\left|\sum_{u,i\in\mathcal{D}}\frac{\nabla_{u,i}e_{u,i}}{|\mathcal{D}|}\right|, \\
        \nabla=\frac{\hat{p}_{u,i}-p_{u,i}}{\hat{p}_{u,i}}.
  \end{eqnarray}
\end{lemma}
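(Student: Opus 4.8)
The plan is to obtain the bias as the gap between the expected IPS estimate and the ideal error $E$, with the expectation taken over the random observation mechanism. First I would make the estimator's dependence on observations explicit: although Eqn.~\ref{eqn:IPS} is displayed as a sum over all pairs, the operative estimator only reweights the \emph{observed} ratings and divides by the \emph{learned} propensity, so I would write it as $E_{\text{IPS}}=\frac{1}{|\mathcal{D}|}\sum_{u,i\in\mathcal{D}}\frac{o_{u,i}e_{u,i}}{\hat{p}_{u,i}}$, with the indicator $o_{u,i}$ restored and $\hat{p}_{u,i}$ in the denominator.

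Next I would take expectations under the selection model, in which each $o_{u,i}$ is an independent Bernoulli variable with $\mathbb{E}[o_{u,i}]=p_{u,i}$, the true propensity, while $e_{u,i}$ and $\hat{p}_{u,i}$ are held fixed. Linearity of expectation then gives $\mathbb{E}[E_{\text{IPS}}]=\frac{1}{|\mathcal{D}|}\sum_{u,i}\frac{p_{u,i}}{\hat{p}_{u,i}}e_{u,i}$. Subtracting the target $E=\frac{1}{|\mathcal{D}|}\sum_{u,i}e_{u,i}$ and placing each term over the common denominator $\hat{p}_{u,i}$ yields $\mathbb{E}[E_{\text{IPS}}]-E=\frac{1}{|\mathcal{D}|}\sum_{u,i}\frac{p_{u,i}-\hat{p}_{u,i}}{\hat{p}_{u,i}}e_{u,i}$. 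Taking absolute values and using $|p_{u,i}-\hat{p}_{u,i}|=|\hat{p}_{u,i}-p_{u,i}|$ recovers exactly $\mathcal{E}_{\text{IPS}}=\left|\sum_{u,i}\frac{\nabla_{u,i}e_{u,i}}{|\mathcal{D}|}\right|$ with $\nabla_{u,i}=\frac{\hat{p}_{u,i}-p_{u,i}}{\hat{p}_{u,i}}$, as stated.

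I expect the only real obstacle to be notational rather than analytical. The calculation itself is a single application of linearity of expectation, so the care must go into fixing conventions that the excerpt leaves implicit: that the sum is over observed entries through $o_{u,i}$, that the weight $1/\hat{p}_{u,i}$ uses the \emph{estimated} propensity, and that it is the expectation of the indicator that surfaces the \emph{true} propensity $p_{u,i}$. Once this true-versus-estimated distinction is pinned down and the selection model is stated precisely, no further estimates are required and the identity is immediate.
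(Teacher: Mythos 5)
Your derivation is correct and is exactly the standard argument: restoring the indicator $o_{u,i}$ and the estimated propensity $\hat{p}_{u,i}$ in the denominator, taking the expectation over $o_{u,i}\sim\mathrm{Bernoulli}(p_{u,i})$, and subtracting the ideal loss immediately yields $\nabla_{u,i}=\frac{\hat{p}_{u,i}-p_{u,i}}{\hat{p}_{u,i}}$. The paper does not prove this lemma itself but quotes it from Wang et al.\ (2019), and your proof reproduces that reference's argument, including the necessary repair of the paper's Eqn.~\ref{eqn:IPS}, which as displayed omits the observation indicator.
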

Lemma \ref{lemma:bias}, as proved and cited from~\cite{wang2019doubly}, demonstrates that the bias of the IPS estimator is proportional to the biases in propensity scores. It follows directly that if the IPS estimator is well-calibrated, the bias term in Lemma~~\ref{lemma:bias} will be zero, indicating that a well-calibrated IPS estimator yields an unbiased estimate.

\begin{theorem}
    \label{theorem:upper_bound}
  For a calibrated IPS estimator, the bias is smaller than the uncalibrated IPS estimator:
    \begin{equation}
        \left|\sum_{u,i\in\mathcal{D}}\frac{\tilde{\nabla}_{u,i}e_{u,i}}{|\mathcal{D}|}\right|  
    \leq \left|\sum_{u,i\in\mathcal{D}}\frac{{\nabla}_{u,i}e_{u,i}}{|\mathcal{D}|}\right| ,
    \end{equation}
if the propensity is calibrated:
\begin{equation}
  \tilde{\nabla}_{u,i}=\frac{\tilde{p}_{u,i}-p_{u,i}}{\tilde{p}_{u,i}}\leq \nabla_{ui}, \tilde{p}=q(f(x)),
\end{equation}
where $q$ is a specific uncertainty calibration method, such as MC-Dropout, deep ensembles and the platt scaling.
\end{theorem}
\begin{proof}
  For a calibrated propensity, the propensity bias has a smaller bias and then the estimator bias smaller according to Lemma~\ref{lemma:bias}.
\end{proof}
Theorem~\ref{theorem:upper_bound} provides insights into the importance of uncertainty and Expected Calibration Error (ECE) in the Inverse Propensity Score (IPS) estimation. As demonstrated in the experiments, a significant reduction in the ECE of IPS leads to improved counterfactual recommendation results under Missing Not At Random (MNAR) conditions.

It has been rigorously analyzed in the literature that not only deep learning models but also shallow models, such as logistic regression, are inherently overconfident. The ECE of a well-specified logistic regression model is positive and cannot be completely eliminated. For further details, refer to~\cite{bai2021don,vaicenavicius2019evaluating}. Consequently, the original IPS estimator is also susceptible to miscalibrated uncertainty and large bias.

\begin{corollary}
  The unbiased and better calibration arguments in Theorem~\ref{theorem:upper_bound} also holds for the doubly robust estimator in~\cite{wang2019doubly}, which consists of the IPS estimator and the error-imputation-based estimator.
\end{corollary}
\begin{proof}
  It was shown in~\cite{wang2019doubly} that the bias term of the doubly estimator is also proportional to the IPS bias:
  \begin{equation}
    \mathcal{E}_{\text{IPS}}=\left|\sum_{u,i\in\mathcal{D}}\frac{\tilde{\nabla}_{u,i}\delta_{u,i}}{|\mathcal{D}|}\right|,
  \end{equation}
  where $\delta_{u,i}$ is the error derivation for missing ratings.
  This completes the proof.
\end{proof}
The prediction inaccuracy of a model is expected to be reduced through uncertainty calibration for the IPS estimator. Given the observed rating matrix $R$, the optimal rating prediction $\hat{R}^{*}$ is learned by the calibrated IPS estimator over the hypothesis space $\mathcal{H}$. We then present the generalization bound and the bias-variance decomposition of the calibrated IPS estimator using the Expected Calibration Errors~\cite{schnabel2016recommendations,wang2019doubly}.
\begin{theorem}\label{theorem:generalization-bound}
  For any finite hypothesis space $\mathcal{H}$ of the recommendation prediction estimations, the prediction error of the optimal prediction matrix $\hat{R}^{*}$ using the calibrated inverse propensity score estimator has the following generalization bound:
    \begin{equation}\label{eqn:generalization-bound}
       \mathcal{E}(\hat{R}^{*},R^{o})+
       \sum_{u,i\in\mathcal{D}}\frac{\tilde{\nabla}_{u,i}}{|\mathcal{D}|}+
       \sqrt{\frac{\log \frac{2|\mathcal{H}|}{\eta}}{2|\mathcal{D}|^2}\sum_{u,i\in\mathcal{D}}\frac{1}{\hat{p}_{ui}^2}},
    \end{equation}
    where the star superscript means the optimal prediction and the tilde means the calibrated IPS estimator.
    $R^{o}$ is the observed rating matrix $R^{o}=\{r_{ui}, o_{ui}=1\}$.
    The second term and third corresponds to the bias term and variance term respectively.
\end{theorem}
\begin{proof}
  Following the generalization bounds of the IPS and DR scoring models in \cite{schnabel2016recommendations,wang2019doubly}, we replace the propensity error with the calibrated one $\tilde{\nabla}$ and get the generalization bound of the calibrated IPS model.
\end{proof}
Theorem~\ref{theorem:generalization-bound} reveals the bias-variance tradeoff in the real-world performance of the calibrated inverse propensity score estimator. A smaller bias results from reduced propensity bias.

{Based theorem 2, better recommendation results come from only lower propensity estimation error but also Expected Calibration Error (ECE). Therein, lower ECE is the main goal of our uncertainty calibration algorithm since IPS predictions are generally overestimated. Therefore, with these two assumptions, we derive the following corollary.}

\begin{corollary}\label{corollary:bias-variance}
Compared with the inverse propensity score estimator, the prediction error bound of the calibrated doubly robust estimator has a smaller bias and has a upper bound that is proportional to $\text{ECE}$:
    \begin{equation}\label{eqn:ece-bound}
        \sum_{u,i\in\mathcal{D}}\frac{\tilde{\nabla}_{u,i}}{|\mathcal{D}|} \leq \sum_{u,i\in\mathcal{D}}\frac{n\cdot\text{ECE}}{|\mathcal{D}|},
    \end{equation}
    where $n$ is the number of the bins for ECE.
\end{corollary}
\begin{proof}
  The calibrated propensity has a lower bias so the bias term of the calibrated IPS is reduced:
  \begin{equation}
    \sum_{u,i\in\mathcal{D}}\frac{\tilde{\nabla}_{u,i}}{|\mathcal{D}|} < \sum_{u,i\in\mathcal{D}}\frac{{\nabla}_{u,i}}{|\mathcal{D}|}.
  \end{equation}
  For the upper bound that consists of ECE, we first rewrite ECE as:
  \begin{equation}
    \text{ECE}=\sum_{j=1}^{n}|\xi_{j}-\hat{\xi}_{j}|=\sum_{i=1}^{n}\left|\sum_{i=1}^{B_{j}}p_{ji}-\sum_{i=1}^{B_{j}}\tilde{p}_{ji}\right|,
  \end{equation}
  where $B_{ji}$ is the number of samples in the $j$-th bin and $p_{ji}$ is the propensity of the $i$-th sample in the $j$-th bin.
  By taking the absolute value for every bin , we can get the result of Eqn.~
  \ref{eqn:ece-bound}.
\end{proof}


Corollary~\ref{corollary:bias-variance} demonstrates that the Expected Calibration Error (ECE) effectively bounds the final prediction error.  {
   Several existing works give proper theoretical anlaysis for the uncertainty calibration works, including parametric calibration function~\cite{vaicenavicius2019evaluating} and unparametric binning method~\cite{kumar2019verified}. These works reinforce the soundness of our approach and provide a broader theoretical context for uncertainty calibration.
}

\section{Experiments}
In this section, we will first provide an overview of the experimental setting, which includes details about the dataset, metrics, and baselines. We will then present our findings on calibration and CVR prediction based on two real-world datasets. Our experiments aim to address three key research questions (RQs):

\begin{enumerate}[(1)]
    \item To what extent is raw propensity estimation miscalibrated? How much improvement can be achieved through our uncertainty calibration module in terms of ECE?
    \item Why is ECE a reliable indicator of propensity score quality? Does a lower ECE result in increased CVR prediction task performance?
    \item How does uncertainty calibration enhance state-of-the-art models in terms of debiasing recommendation performance?
\end{enumerate}

\subsection{Experimental Setting}

\textbf{Datasets and Preprocessing} \\
To assess the debiasing capability of recommendation methods, it is crucial to have a Missing At Random (MAR) testing set. To achieve this, we utilize three prominent datasets: Coat Shopping, Yahoo! R3, and KuaiRand. These datasets contain MAR test sets that enable us to evaluate the performance of CVR prediction without bias~\cite{schnabel2016recommendations,wang2019doubly}.
\begin{itemize}
    \item \textbf{Coat Shopping}\footnote{\url{https://www.cs.cornell.edu/~schnabts/mnar/}}: The coat shopping dataset was collected to simulate the missing not at random data of user shopping for coats online. The customers were ordered to find their favorite coats in the online store. After browsing, the users were asked to rate 24 coats they had explored before and 16 randomly picked ones on a five point scale. It contains ratings from 290 users of 300 items. There are 6960 MNAR ratings  and 4640 MAR ratings. 
    \item \textbf{Yahoo! R3}\footnote{\url{http://webscope.sandbox.yahoo.com/}}: This dataset contains ratings for music collected from two different ways. The first source consists of ratings supplied by users during interaction with Yahoo Music services, which means that the data collected from this source suffer from Missing Not At Random problem. The second source consists of ratings to the music randomly recommended to users during an online survey which means that the data collected from this source is Missing Completely At Random. It includes approximately 300K ratings among which 54000 are MAR ratings.
    \item {\textbf{KuaiRand}\footnote{\url{https://kuairand.com/}}: this datasets includes 7583 videos and 27285 users, containing 1436609 biased data and 1186509 unbiased data. Following recent work\cite{song2023cdr}, we regard the biased data as the training set, and utilize the unbiased data for model validation (10\%) and evaluation (90\%).} 
\end{itemize}

To ensure consistency with the CVR prediction task, we preprocess the three datasets using methods established in previous studies~\cite{guo2021enhanced,saito2020doubly,dai2022generalized}. Here are the specific steps:
\begin{enumerate}[(1)]
    \item The conversion label $r_{u,i}$ is assigned a value of 1 if the rating for the user-item pair is greater than or equal to 4; otherwise it is assigned a value of 0. 
    \item Similarly, the click label $o_{u,i}$ is set to 1 if user $u$ has rated item $i$, and 0 otherwise.
    \item  We obtain the post-click conversion datasets as $\{(u,i,r_{u,i}) | o_{u,i} = 1, \forall (u,i) \in \mathcal{D}\}$
\end{enumerate}
Subsequently, we randomly split both datasets into training and validation sets. For MNAR ratings, 90\% of the ratings are allocated to the training set, while the remaining 10\% are reserved for the validation set. The MAR ratings are kept separate and used as a test set for evaluation purposes.

\noindent\textbf{Calibration Methods Settings} \\
We employ the aforementioned calibration methods for the uncalibrated inverse propensity score and select Neural Collaborative Filtering (\textit{NeuMF}) as the base recommendation method~\cite{he2017neural}. We denote the representative IPS models with and without calibration as follows:
\begin{itemize}
    \item \textbf{Raw Method:} We train a raw propensity estimation model that is not calibrated.
    \item \textbf{MC Dropout}~\cite{gal2016dropout}: Dropout is kept active during the inference stage. For a given user-item pair during testing, we pass it through the propensity model ten times with dropout active, averaging the results to obtain a calibrated propensity score.
    \item \textbf{Deep Ensembles}~\cite{lakshminarayanan2017simple}: We initialize ten models with different random seeds and shuffle the training dataset independently for each model. During testing, we aggregate predictions from these ten models and average the results to obtain calibrated propensity scores.
     {\item \textbf{Dual Focal Loss}\cite{tao2023dual}: We implement the Dual Focal Loss with a gamma parameter of 2.0 to train the propensity model. The loss function considers both the probability of the target label and the highest probability among all other labels that are smaller than the target probability, which helps to achieve better calibration.}
    \item \textbf{Platt Scaling}~\cite{platt1999probabilistic}: We optimize the cross-entropy loss using LBFGS to learn parameters \( b \) and \( c \) in Eqn.~\ref{eqn:platt} for calibrating the propensity scores. 
\end{itemize}

\noindent \textbf{Baselines}

We validate the effectiveness of our methods on three baselines, including two benchmark doubly robust (DR) methods, DR-JL~\cite{wang2019doubly} and MRDR~\cite{guo2021enhanced}, and one classical baseline, Inverse Propensity Scoring (IPS)~\cite{schnabel2016recommendations}. We also compare the calibrated and improved MRDR with four state-of-the-art methods: two are based on multi-task learning, ESCM$^2$-DR~\cite{wang2022escm} and DR-V2~\cite{pmlr-v202-li23ah}; the other two methods improve the propensity score estimation (GPL~\cite{10.1145/3583780.3614760}) and imputation error~\cite{song2023cdr}, respectively.

\noindent\textbf{Evaluation Metric}\\
For uncertainty calibration, we assess the Expected Calibration Error (ECE) using Eqn.~\ref{eqn:ece}. Other evaluation metrics include AUC,  discount cumulative gain (DCG) and Recall~\cite{guo2021enhanced}. Further implementation details, including evaluation metrics, optimization, and hyperparameters for all baselines, can be found in Section II in Supplemental Materials.

\subsection{Calibration Results of propensity scores(RQ1)}
We applied the three calibration methods for propensity scores and plotted the calibration curves along with the estimated Expected Calibration Error (ECE) for the propensity model. The ECE was computed using 100 bins.
\begin{table}[htbp]
\centering
\begin{tabular}{c|cc}
\toprule
\diagbox{Methods}{Datasets}    & Coat shopping & Yahoo! R3 \\ \midrule
raw         & 0.1458        & 0.1131    \\
MC Dropout     & 0.1369        & 0.1064    \\
Deep Ensemble    & 0.1408        & 0.1039    \\
Platt Scaling &\textbf{0.0433} & \textbf{0.0301}\\ 

\bottomrule
\end{tabular}
\caption{Expectation Calibration Errors of Calibrated Propensity Scores}
\label{tb:ece}
\end{table}

As shown in Table~\ref{tb:ece}, the calibration methods, especially Platt scaling, significantly reduce the Expected Calibration Error (ECE). Compared to uncalibrated propensity scores, Platt scaling reduces the ECE by more than a factor of three.
Figure~\ref{fig:coat} presents the calibration curves and propensity histograms of the calibrated propensity scores, where ``Raw" denotes uncalibrated propensity scores. In Figure~\ref{fig:coat}(a), calibration narrows the gap between the raw propensity model and the perfect propensity model (represented by the diagonal line). Since propensity-based debiasing methods rely solely on propensity scores from click events, the right side of the calibration curve further validates the effectiveness of propensity score calibration.

\begin{figure}[htbp!]
    \centering
    \subcaptionbox{Calibration Curve}{\includegraphics[width=0.23\textwidth]{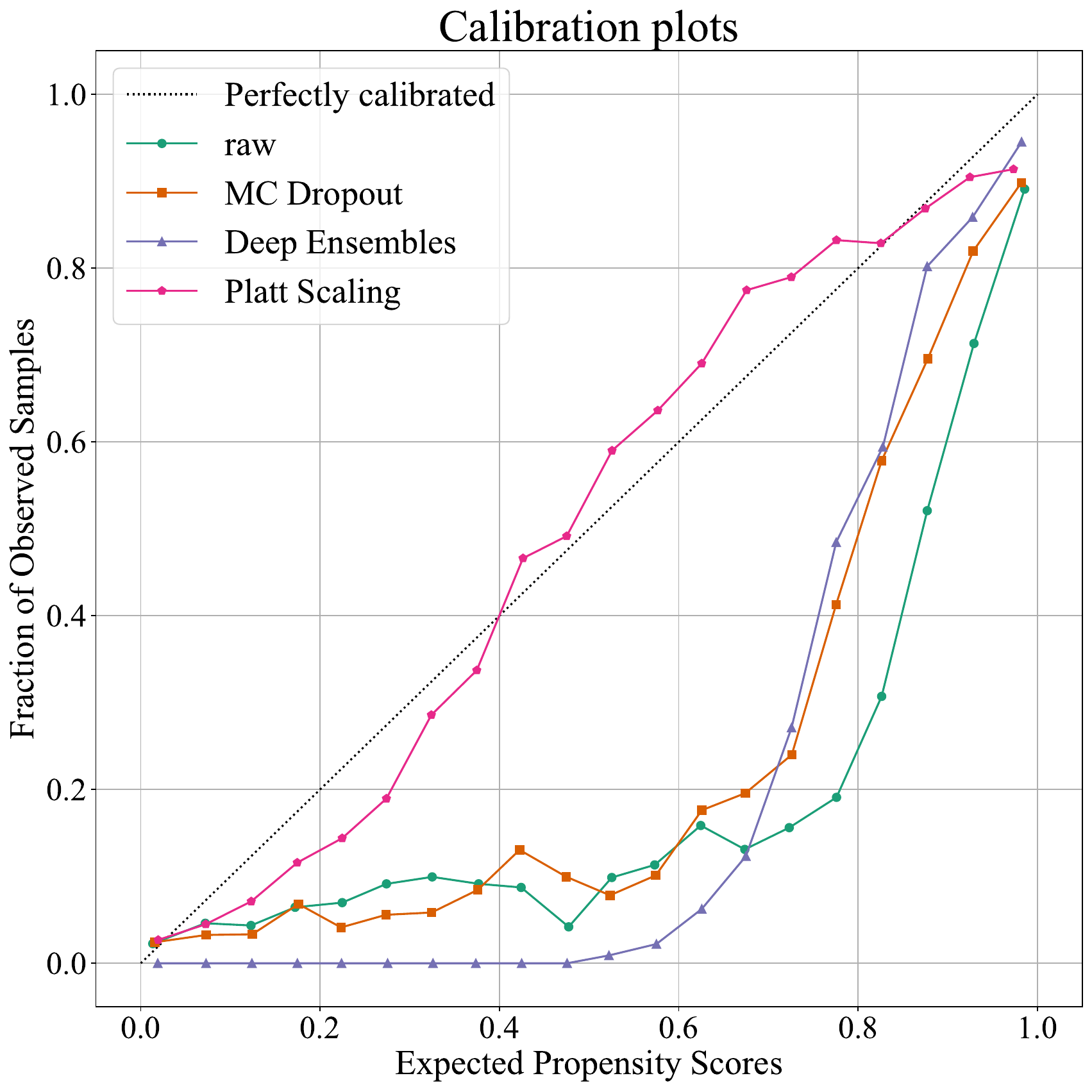}}
    \subcaptionbox{propensity scores Histgram}{\includegraphics[width=0.23\textwidth]{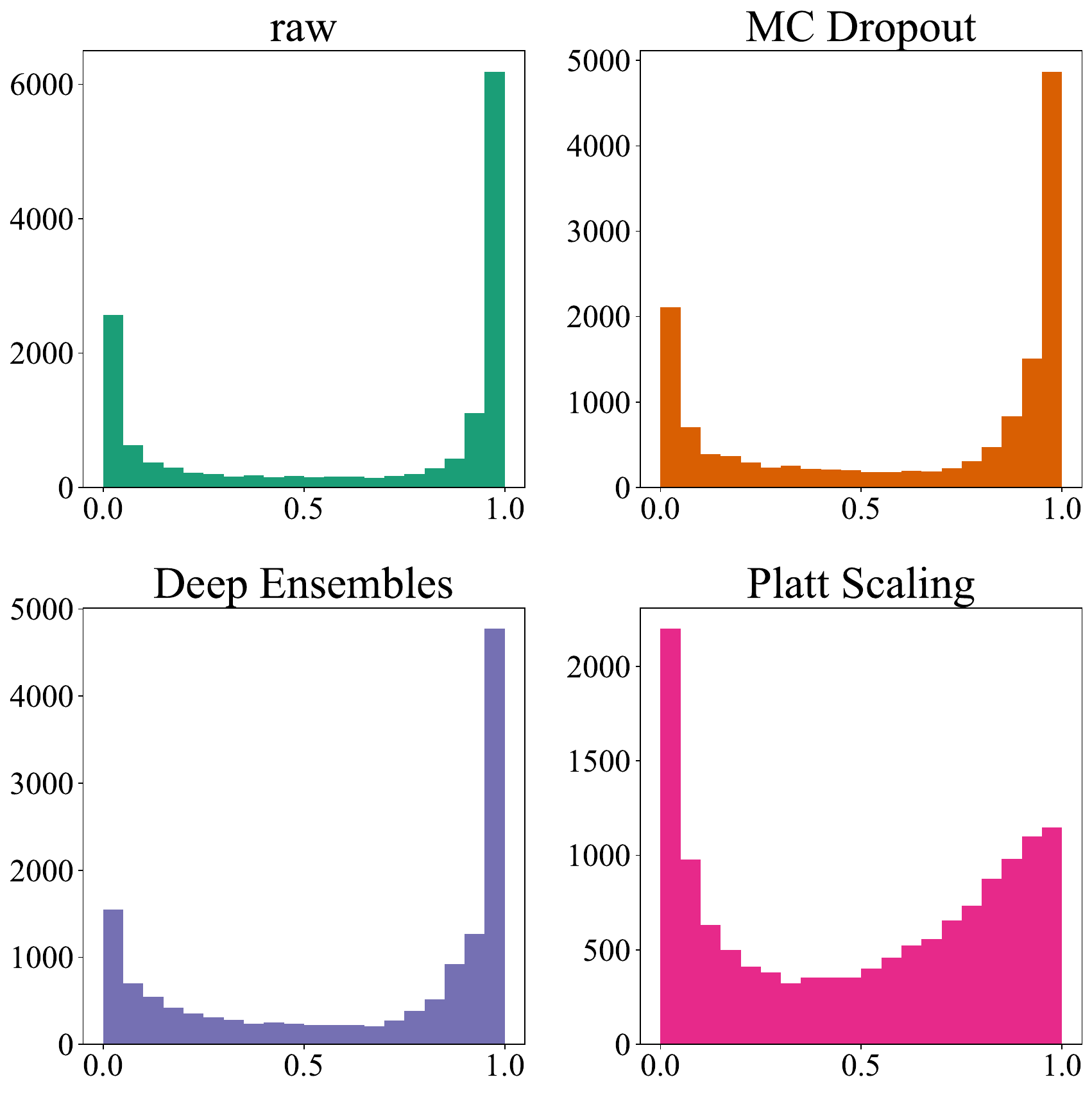}}
    \caption{Calibration Curve and Propensity Histograms of Calibrated propensity scores on the Coat Shopping Dataset}
    \label{fig:coat}
\end{figure}

\begin{table*}[!htbp]
\centering
\resizebox{\textwidth}{!}{
\begin{tabular}{llc cccccc}
\toprule
\multirow{2}{*}{Datasets}      & \multirow{2}{*}{Methods} &  \multirow{2}{*}{ {AUC}} & \multicolumn{3}{c}{DCG@K}                     & \multicolumn{3}{c}{Recall@K}          \\ \cmidrule(r){4-6} \cmidrule(r){7-9} 
&   & & K=2  & K=4   & K=6 & K=2 & K=4 & K=6           \\ \midrule\
\multirow{6}{*}{Coat Shopping} 
& ${\rm Neumf}_{base}$ &  {0.7604{\tiny $\pm$0.0041}} & 0.7478{\tiny $\pm$0.0201} & 1.0152{\tiny $\pm$0.0222} & 1.1989{\tiny $\pm$0.0243} & 0.8705{\tiny $\pm$0.0287} & 1.4435{\tiny $\pm$0.0301} & 1.9371{\tiny $\pm$0.0205} \\
& Raw &  {0.7578{\tiny $\pm$0.0036}} & 0.7472{\tiny $\pm$0.0143} & 1.0204{\tiny $\pm$0.0124} & 1.2010{\tiny $\pm$0.0111} & 0.8738{\tiny $\pm$0.0176} & 1.4591{\tiny $\pm$0.0177} & 1.9451{\tiny $\pm$0.0227} \\
& MC Dropout  &  {0.7632{\tiny $\pm$0.0011}‡} & \textbf{0.7651{\tiny $\pm$0.0242}‡} & \underline{1.0322{\tiny $\pm$0.0172}‡} & \underline{1.2101{\tiny $\pm$0.0145}‡} & \textbf{0.8962{\tiny $\pm$0.0283}‡} & \underline{1.4675{\tiny $\pm$0.0266}} & 1.9443{\tiny $\pm$0.0209} \\
& Deep Ensembles &  {\underline{0.7675{\tiny $\pm$0.0059}‡}} & 0.7584{\tiny$\pm$0.0271} & 1.0256{\tiny$\pm$0.0232} & 1.2065{\tiny$\pm$0.0225} & 0.8848{\tiny$\pm$0.0342} & 1.4574{\tiny $\pm$0.0341} & \underline{1.9454{\tiny $\pm$0.0322}} \\
&  {Dual FocalLoss} &  {0.7611{\tiny $\pm$0.0029}} & 0.7491{\tiny $\pm$0.0154} & 1.0264{\tiny $\pm$0.0129} & 1.2034{\tiny $\pm$0.0132} & 0.8751{\tiny $\pm$0.0253} & 1.4458{\tiny $\pm$0.0214} & 1.9479{\tiny $\pm$0.0299} \\
& Platt Scaling &  {\textbf{0.7693{\tiny $\pm$0.0036}‡}} & \underline{0.7627{\tiny $\pm$0.0155}‡} & \textbf{1.0405{\tiny $\pm$0.0215}‡} & \textbf{1.2259{\tiny $\pm$0.0171}‡} & \underline{0.8890{\tiny $\pm$0.0168}‡} & \textbf{1.4823{\tiny $\pm$0.0374}‡} & \textbf{1.9806{\tiny $\pm$0.0229}‡} \\
\midrule\midrule
\multirow{6}{*}{Yahoo! R3}     
& ${\rm Neumf}_{base}$ &  {0.7131{\tiny $\pm$0.0009}} & 0.5277{\tiny $\pm$0.0209} & 0.7352{\tiny $\pm$0.0209} & 0.8630{\tiny $\pm$0.0209} & 0.6333{\tiny $\pm$0.0209} & 1.0769{\tiny $\pm$0.0209} & 1.4202{\tiny $\pm$0.0209} \\
& Raw &  {0.7172{\tiny $\pm$0.0034}} & \underline{0.5433{\tiny $\pm$0.0056}} & 0.7395{\tiny $\pm$0.0065} & 0.8669{\tiny $\pm$0.0062} & \underline{0.6468{\tiny $\pm$0.0048}} & 1.0661{\tiny $\pm$0.0063} & 1.4086{\tiny $\pm$0.0057} \\
& MC Dropout  &  {0.7216{\tiny $\pm$0.0027}‡} & 0.5410{\tiny $\pm$0.0178} & 0.7406{\tiny $\pm$0.0202} & 0.8663{\tiny $\pm$0.0187} & 0.6452{\tiny $\pm$0.0197} & 1.0720{\tiny $\pm$0.0248}‡ & 1.4094{\tiny $\pm$0.0211} \\
& Deep Ensembles &  {\underline{0.7254{\tiny $\pm$0.0010}‡}} & 0.5342{\tiny $\pm$0.0043} & \underline{0.7412{\tiny $\pm$0.0047}} & \underline{0.8692{\tiny $\pm$0.0027}} & 0.6404{\tiny $\pm$0.0049} & \underline{1.0831{\tiny $\pm$0.0068}‡} & \underline{1.4270{\tiny $\pm$0.0047}‡} \\
&  {Dual FocalLoss} &  {0.7219{\tiny $\pm$0.0058}‡} &  {0.5441{\tiny $\pm$0.0114}} &  {0.7452{\tiny $\pm$0.0131}‡} &  {0.8720{\tiny $\pm$0.0120}‡} &  {0.6515{\tiny $\pm$0.0119}‡} &  {1.0708{\tiny $\pm$0.0147}‡} &  {1.4216{\tiny $\pm$0.0115}‡} \\
& Platt Scaling &  {\textbf{0.7235{\tiny $\pm$0.0017}‡}} & \textbf{0.5470{\tiny $\pm$0.0065}‡} & \textbf{0.7535{\tiny $\pm$0.0033}‡} & \textbf{0.8778{\tiny $\pm$0.0039}‡} & \textbf{0.6528{\tiny $\pm$0.0088}‡} & \textbf{1.0941{\tiny $\pm$0.0053}‡} & \textbf{1.4275{\tiny $\pm$0.0060}‡} \\
\bottomrule
\end{tabular}
}
\caption{Overall IPS-based recommendation performance on Coat Shopping and Yahoo! R3. The best results are shown in boldface, and the second best results are marked using underline. ‡ indicates statistically significant improvements over the Raw method at p $<$ 0.05 level.}
\label{tb:rec_results}
\end{table*}

Figure~\ref{fig:coat}(b) shows the propensity histograms of the calibrated IPS methods trained on the Coat Shopping dataset. It can be observed that the calibrated propensity scores not only exhibit lower ECE but also demonstrate reduced polarization. Both ECE and polarization are crucial aspects of propensity scores. The calibration curve and propensity histograms for the Yahoo! R3 dataset are detailed in Figure~2 in the Section IV in the Supplemental Material, which supports the findings from Figure~\ref{fig:coat}.

\subsection{CVR Prediction Results of IPS(RQ2)}

We utilize both uncalibrated and calibrated propensity scores to train the debiasing CVR models $f_\theta$, respectively. As a baseline, we train the recommendation model without using propensity scores, implying that all samples are given equal weight for loss. Table~\ref{tb:rec_results} presents the overall IPS debiasing performance in terms of DCG@K, Recall@K ($K=2,4,6$) and AUC on three real-world datasets\footnote{Recall refers to the recall number, which may exceed $1$.}. We repeat the experiments ten times and report the mean results to mitigate randomness. From the table, it can be observed that the IPS method with uncalibrated propensity scores achieves marginal improvement in recommendation performance compared to the baseline methods. Interestingly, the recall metric for the Yahoo! R3 dataset even shows a slight decrease, indicating that poorly calibrated propensity scores do not effectively aid the IPS-based training process.

With calibrated propensity scores, the IPS debiasing method shows significant improvement. As demonstrated in Table~\ref{tb:rec_results} and Table~\ref{tb:kuairand}, Platt scaling calibrated propensity scores outperform the uncalibrated ones in terms of all evaluation metrics on three real-world datasets. For instance, Platt scaling based IPS demonstrates substantial relative improvements of 1.51\%, 2.08\%, 1.98\%, and 2.07\% over the uncalibrated IPS method for AUC, DCG@2, DCG@4, and DCG@6 on the Coat Shopping dataset, respectively.

From the results presented in Table~\ref{tb:ece} and Table~\ref{tb:rec_results}, it is evident that propensity scores with lower calibration errors yield better recommendation results. The propensity scores calibrated by Platt scaling exhibit the lowest calibration error and outperform other calibration techniques across most recommendation evaluation metrics. Hence, ECE serves as a reliable measure of the effectiveness of propensity scores in mitigating bias in recommendations.

\subsection{CVR predcition Results of SOTA debiasing methods(RQ3)}

As our method improves the quality of propensity score estimation, it can readily be extended to other propensity score-based debiasing methods. We conducted experiments on six state-of-the-art CVR prediction models: DR-JL~\cite{wang2019doubly}, MRDR~\cite{guo2021enhanced}, GPL~\cite{zhou2023generalized}, CDR~\cite{song2023cdr}, DR-V2~\cite{pmlr-v202-li23ah}, and ESCM$^2$~\cite{wang2022escm}. Table~\ref{tb:dr} demonstrates that calibrated propensity scores outperform raw propensity scores on all evaluation metrics, highlighting the effectiveness of uncertainty calibration for other propensity score-based methods. Table~\ref{tb:sota} shows that our method surpasses current state-of-the-art (SOTA) approaches in terms of performance. Table~\ref{tb:sota} demonstrates that our method outperforms the current state-of-the-art (SOTA) approaches in terms of performance.  {This improvement is largely attributed to our method’s ability, particularly when combined with Platt scaling calibration, to more accurately estimate the probability of each (user, item) pair appearing in the observed data. Enhanced accuracy in propensity score estimation directly translates to higher-quality recommendations}

 {Experimental results consistently confirm that better-calibrated propensity scores lead to superior performance. As illustrated in Tables II–V, methods utilizing calibrated propensity scores consistently outperform the raw baseline. This highlights a strong correlation between calibration quality and recommendation performance. Notably, Platt scaling achieves the highest calibration accuracy, which directly results in the best recommendation outcomes. In contrast, the raw baseline suffers from biased propensity scores that fail to accurately represent true interaction probabilities, leading to suboptimal recommendation results. Our experiments clearly show that this limitation can be effectively mitigated through proper calibration, underscoring the critical role of accurate propensity score estimation in achieving superior recommendation quality.}

\subsection{Efficiency experiment}

\begin{table}[h]
\begin{tabular}{ccccc}
\toprule
Methods  & Raw  & Platt Scaling & MC Dropout & Deep Ensembles    \\  \midrule
Training & 34.12s  & 36.31s         & 34.12s     & 246.34s                   \\
Inference     & 2.51s & 2.53s          & 5.78s      & 6.11s                    \\ \bottomrule
\end{tabular}
\caption{The time consumption of the Propensity estimation model employing different calibration techniques on dataset Coat Shopping in seconds. }
\end{table}

Table 4 presents the time consumption of the propensity estimation model using different calibration techniques on the Coat Shopping dataset. The efficiency experiment was conducted on a single 3090 GPU. It is evident that both Platt scaling and MC dropout techniques exhibit low time costs, whereas Deep Ensemble incurs higher costs due to the necessity of training multiple models. Therefore, the BatchEnsembles method~\cite{wen2019batchensemble} can be employed to reduce the overall computation cost. These efficiency experiment results are consistent with the complexity analysis in Section~\ref{sec:computational complexity}.

\begin{table*}[!htbp]
\centering
\resizebox{\textwidth}{!}{
\begin{tabular}{c|ccccccccc}
\toprule
\multicolumn{1}{c}{}& & & \multirow{2}{*}{ {AUC}} & \multicolumn{3}{c}{DCG@K} & \multicolumn{3}{c}{Recall@K} \\ \cmidrule(r){5-7} \cmidrule(r){8-10} 
\multicolumn{1}{c}{\multirow{-2}{*}{Baseline}} & \multirow{-2}{*}{Dataset}  & \multirow{-2}{*}{Methods} & & K=2 & K=4  & K=6 & K=2  & K=4 & K=6 \\ \midrule

& & Raw &  {0.7644{\tiny $\pm$0.0032}} & 0.7454{\tiny $\pm$0.0158} & 1.0185{\tiny $\pm$0.0196} & 1.2014{\tiny $\pm$0.0126} & 0.8717{\tiny $\pm$0.0214} & 1.4570{\tiny $\pm$0.0243} & 1.9489{\tiny $\pm$0.0209} \\

& & Dropout  &  {0.7673{\tiny $\pm$0.0024}} & 0.7496{\tiny $\pm$0.0205} & \underline{1.0271{\tiny $\pm$0.0165}‡} & 1.2073{\tiny $\pm$0.0157} & \underline{0.8835{\tiny $\pm$0.0249}‡} & \underline{1.4764{\tiny $\pm$0.0216}‡} & 1.9603{\tiny $\pm$0.0286} \\

& & Ensembles &  {\textbf{0.7805{\tiny $\pm$0.0028}}‡} & \textbf{0.7546{\tiny $\pm$0.0160}‡} & 1.0254{\tiny $\pm$0.0110} & \underline{1.2132{\tiny $\pm$0.0119}‡} & \underline{0.8835{\tiny $\pm$0.0228}‡} & 1.4637{\tiny $\pm$0.0126} & \underline{1.9684{\tiny $\pm$0.0232}‡} \\

& &  {Dual FocalLoss} &  {0.7699{\tiny $\pm$0.0030}‡} &  {\underline{0.7542{\tiny $\pm$0.0163}}‡} &  {1.0106{\tiny $\pm$0.0216}} &  {1.1939{\tiny $\pm$0.0179}} &  {0.8834{\tiny $\pm$0.0154}‡} &  {1.4651{\tiny $\pm$0.0257}} &  {1.9514{\tiny $\pm$0.0210}} \\

& \multirow{-5}{*}{Coat Shopping} & Platt  &  {\underline{0.7730{\tiny $\pm$0.0018}}‡} & 0.7539{\tiny $\pm$0.0201}‡ & \textbf{1.0389{\tiny $\pm$0.0208}‡} & \textbf{1.2241{\tiny $\pm$0.0170}‡} & \textbf{0.8852{\tiny $\pm$0.0235}‡} & \textbf{1.4940{\tiny $\pm$0.0290}‡} & \textbf{1.9928{\tiny $\pm$0.0208}‡} \\

\cmidrule(r){2-10} 

& & Raw &  {0.7152{\tiny $\pm$0.0053}} & 0.5450{\tiny $\pm$0.0093} & 0.7405{\tiny $\pm$0.0097} & 0.8779{\tiny $\pm$0.0091} & 0.6402{\tiny $\pm$0.0113} & 1.0587{\tiny $\pm$0.0143} & 1.4112{\tiny $\pm$0.0141} \\

& & Dropout &  {0.7130{\tiny $\pm$0.0049}} & \underline{0.5501{\tiny $\pm$0.0152}} & \underline{0.7597{\tiny $\pm$0.0126}‡} & 0.8774{\tiny $\pm$0.0103} & \underline{0.6539{\tiny $\pm$0.0161}‡} & 1.0822{\tiny $\pm$0.0124}‡ & 1.4183{\tiny $\pm$0.0065}‡ \\

& & Ensembles &  {\textbf{0.7258{\tiny $\pm$0.0011}}‡} & 0.5431{\tiny $\pm$0.0053} & 0.7491{\tiny $\pm$0.0043}‡ & 0.8747{\tiny $\pm$0.0050} & 0.6510{\tiny $\pm$0.0044}‡ & \underline{1.0923{\tiny $\pm$0.0054}‡} & \underline{1.4293{\tiny $\pm$0.0068}‡} \\

& &  {Dual FocalLoss} &  {0.7121{\tiny $\pm$0.0141}} &  {0.5358{\tiny $\pm$0.0118}} &  {0.7447{\tiny $\pm$0.0114}} & \underline{ {0.8788{\tiny $\pm$0.0295}}} &  {0.6429{\tiny $\pm$0.0278}} &  {1.0584{\tiny $\pm$0.0287}} &  {1.4186{\tiny $\pm$0.0276}‡} \\

\multirow{-10}{*}{DR-JL} & \multirow{-5}{*}{Yahoo! R3} & Platt  &  {\underline{0.7248{\tiny $\pm$0.0016}}‡} & \textbf{0.5532{\tiny $\pm$0.0058}‡} & \textbf{0.7555{\tiny $\pm$0.0042}‡} & \textbf{0.8816{\tiny $\pm$0.0034}‡} & \textbf{0.6602{\tiny $\pm$0.0059}‡} & \textbf{1.0926{\tiny $\pm$0.0073}‡} & \textbf{1.4314{\tiny $\pm$0.0062}‡} \\

\midrule \midrule 

& & Raw &  {\underline{0.7691{\tiny $\pm$0.0035}}} & 0.6830{\tiny $\pm$0.0151} &  0.9661{\tiny $\pm$0.0131} &  1.1578{\tiny $\pm$0.0101} &  0.8185{\tiny $\pm$0.0163} &  1.4261{\tiny $\pm$0.0192} &  1.9409{\tiny $\pm$0.0205} \\

& & Dropout &  {0.7661{\tiny $\pm$0.0011}} & 0.7255{\tiny $\pm$0.0200}‡ & 0.9946{\tiny $\pm$0.0138}‡ & 1.1847{\tiny $\pm$0.0127}‡ & 0.8438{\tiny $\pm$0.0257}‡ & 1.4177{\tiny $\pm$0.0247} & 1.9282{\tiny $\pm$0.0312} \\

& & Ensembles &  {0.7647{\tiny $\pm$0.0038}} & \underline{0.7292{\tiny $\pm$0.0273}‡} & 1.0001{\tiny $\pm$0.0192}‡ & 1.1846{\tiny $\pm$0.0182}‡ & \underline{0.8523{\tiny $\pm$0.0352}‡} & 1.4303{\tiny $\pm$0.0190} & 1.9282{\tiny $\pm$0.0299} \\

& &  {Dual FocalLoss} &  {0.7606{\tiny $\pm$0.0052}} &  {0.7260{\tiny $\pm$0.0205}‡} &  {\underline{1.0041{\tiny $\pm$0.0162}}‡} &  {\underline{1.1885{\tiny $\pm$0.0203}}‡} &  {0.8439{\tiny $\pm$0.0222}‡} & \underline{ {1.4395{\tiny $\pm$0.0171}‡}} & \underline{ {1.9353{\tiny $\pm$0.0225}}} \\

& \multirow{-5}{*}{Coat Shopping} & Platt  &  {\textbf{0.7728{\tiny $\pm$0.0025}}‡} & \textbf{0.7648{\tiny $\pm$0.0190}‡} & \textbf{1.0155{\tiny $\pm$0.0170}‡} & \textbf{1.2223{\tiny $\pm$0.0165}‡} & \textbf{0.8987{\tiny $\pm$0.0230}‡} & \textbf{1.4688{\tiny $\pm$0.0258}‡} & \textbf{1.9957{\tiny $\pm$0.0395}‡} \\

\cmidrule(r){2-10} 

& & Raw &  {0.6678{\tiny $\pm$0.0162}} & 0.5371{\tiny $\pm$0.0447} & 0.7441{\tiny $\pm$0.0502} & 0.8636{\tiny $\pm$0.0455} & 0.6383{\tiny $\pm$0.0508} & 1.0808{\tiny $\pm$0.0638} & 1.4006{\tiny $\pm$0.0524} \\

& & Dropout &  {0.6671{\tiny $\pm$0.0135}} & \underline{0.5458{\tiny $\pm$0.0194}} & \underline{0.7461{\tiny $\pm$0.0240}} & 0.8718{\tiny $\pm$0.0225}‡ & \underline{0.6547{\tiny $\pm$0.0209}‡} & 1.0821{\tiny $\pm$0.0363} & 1.4199{\tiny $\pm$0.0382}‡ \\

& & Ensembles &  {\underline{0.6907{\tiny $\pm$0.0026}}‡} & 0.5410{\tiny $\pm$0.0093} &  0.7443{\tiny $\pm$0.0098} &  \underline{0.8731{\tiny $\pm$0.0090}‡} &  0.6444{\tiny $\pm$0.0120} &  1.0809{\tiny $\pm$0.0133} &  \underline{1.4256{\tiny $\pm$0.0127}‡} \\

& &  {Dual FocalLoss} &  {0.6679{\tiny $\pm$0.0461}} &  {0.5252{\tiny $\pm$0.0062}} &  {0.7460{\tiny $\pm$0.0194}} &  {0.8640{\tiny $\pm$0.0203}} &  {0.6445{\tiny $\pm$0.0191}} & \underline{ {1.0829{\tiny $\pm$0.0166}}} &  {1.4144{\tiny $\pm$0.0190}‡} \\

\multirow{-10}{*}{MRDR} & \multirow{-5}{*}{Yahoo! R3} & Platt &  {\textbf{0.6988{\tiny $\pm$0.0018}}‡} & \textbf{0.5623{\tiny $\pm$0.0092}‡} & \textbf{0.7571{\tiny $\pm$0.0066}‡} & \textbf{0.8858{\tiny $\pm$0.0072}‡} & \textbf{0.6687{\tiny $\pm$0.0103}‡} & \textbf{1.0862{\tiny $\pm$0.0080}‡} & \textbf{1.4326{\tiny $\pm$0.009}‡} \\

\bottomrule
\end{tabular}
}
\caption{DRJL and MRDR CVR prediction performance on Coat Shopping and Yahoo! R3. The best results are shown in boldface and the second best results are marked using underline. ‡ indicates statistically significant improvements over the Raw method at p $<$ 0.05 level.}
\label{tb:dr}
\end{table*}

\begin{table*}[!htbp]
\centering
\resizebox{\textwidth}{!}{
\begin{tabular}{cccccccccc}
\toprule
\multicolumn{2}{c}{}                          & \multirow{2}{*}{ {AUC}} & \multicolumn{3}{c}{DCG@K}                        & \multicolumn{3}{c}{Recall@K}                     &                           \\ \cmidrule(r){4-6} \cmidrule(r){7-9}
\multicolumn{2}{c}{\multirow{-2}{*}{Methods}} & & K=2                    & K=4                    & K=6                    & K=2                    & K=4                    & K=6                    & \multirow{-2}{*}{Average} \\ \midrule

& Raw              &  {0.6493{\tiny $\pm$0.0034}} & 0.4426{\tiny $\pm$ 0.0076} & 0.6728{\tiny $\pm$ 0.0110} & 0.8471{\tiny $\pm$ 0.0108} & 0.5404{\tiny $\pm$ 0.0096} & 1.0351{\tiny $\pm$ 0.0166} & 1.5028{\tiny $\pm$ 0.0163} & 0.8401 \\
& MC Dropout       &  {0.6366{\tiny $\pm$0.0058}} & 0.4515{\tiny $\pm$ 0.0063}‡ & 0.6855{\tiny $\pm$ 0.0064}‡ & 0.8593{\tiny $\pm$ 0.0069}‡ & 0.5520{\tiny $\pm$ 0.0073}‡ & 1.0545{\tiny $\pm$ 0.0080}‡ & 1.5208{\tiny $\pm$ 0.0101}‡ & 0.8539 \\
& Deep Ensembles   & \underline{ {0.6550{\tiny $\pm$0.0019}}} & 0.4562{\tiny $\pm$ 0.0045}‡ & {0.6893{\tiny $\pm$ 0.0048}‡} & {0.8627{\tiny $\pm$ 0.0042}‡} & {0.5579{\tiny $\pm$ 0.0052}‡} & {1.0575{\tiny $\pm$ 0.0063}‡} & {1.5228{\tiny $\pm$ 0.0058}‡} & {0.8577} \\
&  {Dual FocalLoss} &  {0.6373{\tiny $\pm$0.0076}} & \underline{ {0.4634{\tiny $\pm$0.0089}‡}} & \textbf{ {0.7068{\tiny $\pm$0.0116}‡}} & \underline{ {0.8705{\tiny $\pm$0.0130}‡}} & \underline{ {0.5648{\tiny $\pm$0.0106}‡}} & \textbf{ {1.0761{\tiny $\pm$0.0161}‡}} & \underline{ {1.5420{\tiny $\pm$0.0200}‡}} & \underline{ {0.8706}}\\
\multirow{-5}{*}{IPS} & Platt Scaling    & \textbf{ {0.6682{\tiny $\pm$0.0029}‡}} & \textbf{0.4657{\tiny $\pm$ 0.0030}‡} & \underline{0.7021{\tiny $\pm$ 0.0025}‡} & \textbf{0.8774{\tiny $\pm$ 0.0026}‡} & \textbf{0.5677{\tiny $\pm$ 0.0029}‡} & \textbf{1.0754{\tiny $\pm$ 0.0030}‡} & \textbf{1.5455{\tiny $\pm$ 0.0036}‡} & \textbf{0.8723} \\

\midrule

& Raw              &  {0.6478{\tiny $\pm$0.0022}} & 0.4442{\tiny $\pm$ 0.0083} & 0.6742{\tiny $\pm$ 0.0111} & 0.8481{\tiny $\pm$ 0.0115} & 0.5420{\tiny $\pm$ 0.0096} & 1.0362{\tiny $\pm$ 0.0160} & 1.5026{\tiny $\pm$ 0.0170} & 0.8412 \\
& MC Dropout       &  {0.6343{\tiny $\pm$0.0074}} & 0.4504{\tiny $\pm$ 0.0042}‡ & 0.6839{\tiny $\pm$ 0.0044}‡ & 0.8580{\tiny $\pm$ 0.0052}‡ & 0.5504{\tiny $\pm$ 0.0041}‡ & 1.0520{\tiny $\pm$ 0.0047}‡ & 1.5189{\tiny $\pm$ 0.0071}‡ & 0.8523 \\
& Deep Ensembles   &  {\underline{0.6547{\tiny $\pm$0.0030}}} & {0.4524{\tiny $\pm$ 0.0064}‡} & {0.6854{\tiny $\pm$ 0.0039}‡} & {0.8606{\tiny $\pm$ 0.0043}‡} & {0.5528{\tiny $\pm$ 0.0051}‡} & {1.0530{\tiny $\pm$ 0.0048}‡} & {1.5231{\tiny $\pm$ 0.0067}‡} & {0.8546} \\
&  {Dual FocalLoss} &  {0.6436{\tiny $\pm$0.0185}} & {\underline{0.4673}{\tiny $\pm$0.0410}‡} &  {\underline{0.7065}{\tiny $\pm$0.0460}‡} &  {\underline{0.8815}{\tiny $\pm$0.0459}‡} &  {\underline{0.5698}{\tiny $\pm$0.0458}‡} &  {\textbf{1.0834}{\tiny $\pm$0.0565}‡} &  {\underline{1.5530}{\tiny $\pm$0.0564}‡} &  {\underline{0.8769}} \\

\multirow{-5}{*}{DR-JL} & Platt Scaling    &  {\textbf{0.6679{\tiny $\pm$0.0017}‡}} & \textbf{0.4701{\tiny $\pm$ 0.0073}‡} & \textbf{0.7070{\tiny $\pm$ 0.0079}‡} & \textbf{0.8834{\tiny $\pm$ 0.0080}‡} & \textbf{0.5733{\tiny $\pm$ 0.0081}‡} & \underline{1.0823{\tiny $\pm$ 0.0095}‡} & \textbf{1.5553{\tiny $\pm$ 0.0096}‡} & \textbf{0.8786} \\

\midrule
& Raw              &  {0.5465{\tiny $\pm$0.0064}} & 0.4369{\tiny $\pm$ 0.0246} & 0.6531{\tiny $\pm$ 0.0287} & 0.8144{\tiny $\pm$ 0.0295} & 0.5305{\tiny $\pm$ 0.0292} & 0.9957{\tiny $\pm$ 0.0376} & 1.4290{\tiny $\pm$ 0.0401} & 0.8099 \\
& MC Dropout       &  {0.5491{\tiny $\pm$0.0044}} & {0.4412{\tiny $\pm$ 0.0125}‡} & 0.6659{\tiny $\pm$ 0.0183}‡ & 0.8305{\tiny $\pm$ 0.0200}‡ & 0.5376{\tiny $\pm$ 0.0152}‡ & 1.0205{\tiny $\pm$ 0.0275}‡ & 1.4630{\tiny $\pm$ 0.0302}‡ & 0.8265 \\
& Deep Ensembles   &  {\underline{0.5855{\tiny $\pm$0.0050}‡}} & 0.4406{\tiny $\pm$ 0.0101}‡ & {0.6681{\tiny $\pm$ 0.0127}‡} & {0.8432{\tiny $\pm$ 0.0135}‡} & {0.5390{\tiny $\pm$ 0.0131}‡} & {1.0275{\tiny $\pm$ 0.0171}‡} & {1.4972{\tiny $\pm$ 0.0215}‡} & {0.8359} \\
&  {Dual FocalLoss} &  {0.5813{\tiny $\pm$0.0028}‡} &  {\underline{0.4777{\tiny $\pm$0.0151}‡}} &  {\underline{0.7165{\tiny $\pm$0.0169}‡}} &  {\underline{0.8920{\tiny $\pm$0.0183}‡}} &  {\underline{0.5828{\tiny $\pm$0.0173}‡}} &  {\underline{1.0955{\tiny $\pm$0.0217}‡}} &  {\textbf{1.5661{\tiny $\pm$0.0273}‡}} &  {\underline{0.8884}} \\
\multirow{-5}{*}{MRDR} & Platt Scaling    &  {\textbf{0.6247{\tiny $\pm$0.0035}‡}} &\textbf{0.4928{\tiny $\pm$ 0.0057}‡} & \textbf{0.7259{\tiny $\pm$ 0.0076}‡} & \textbf{0.8963{\tiny $\pm$ 0.0079}‡} & \textbf{0.5971{\tiny $\pm$ 0.0070}‡} & \textbf{1.0972{\tiny $\pm$ 0.0109}‡} & \underline{1.5549{\tiny $\pm$ 0.0119}‡} & \textbf{0.8940}\\

\bottomrule
\end{tabular}
}
\caption{Overall performance on KuaiRand. ‡ indicates statistically significant improvements over the Raw method at p $<$0.05 level.}
\label{tb:kuairand}
\end{table*}

\begin{table*}[]
\centering
\resizebox{\textwidth}{!}{
\begin{tabular}{c|cccccccc}
\toprule
                     &           & \multirow{2}{*}{ {AUC}} & \multicolumn{3}{c}{DCG@K} & \multicolumn{3}{c}{Recall@K} \\ \cmidrule(r){4-6}\cmidrule(r){7-9}
\multirow{-2}{*}{Datasets}      & \multirow{-2}{*}{Methods} & & K=2 & K=4 & K=6 & K=2 & K=4 & K=6 \\ \midrule
                                & MRDR-GPL &  {0.7521{\tiny $\pm$0.0035}} & 0.7488{\tiny $\pm$ 0.0201} & 1.0061{\tiny $\pm$ 0.0222} & 1.1949{\tiny $\pm$ 0.0243} & 0.8734{\tiny $\pm$ 0.0287} & 1.4219{\tiny $\pm$ 0.0301} & 1.9283{\tiny $\pm$ 0.0405} \\
                                & MRDR-CDR &  {0.7622{\tiny $\pm$0.0031}} & 0.7579{\tiny $\pm$ 0.0201} & \underline{1.0192{\tiny $\pm$ 0.0192}} & 1.1991{\tiny $\pm$ 0.0198} & 0.8903{\tiny $\pm$ 0.0204} & 1.4515{\tiny $\pm$ 0.0277} & 1.9325{\tiny $\pm$ 0.0402} \\
                                & DR-V2 &  {0.7637{\tiny $\pm$0.0039}} & \textbf{0.7746{\tiny $\pm$ 0.0188}} & 1.0116{\tiny $\pm$ 0.0164} & 1.2076{\tiny $\pm$ 0.0176} & 0.8841{\tiny $\pm$ 0.0184} & 1.4568{\tiny $\pm$ 0.0152} & 1.9494{\tiny $\pm$ 0.0324} \\
                                & ESCM$^2$-DR & \underline{ {0.7681{\tiny $\pm$0.0041}}} & 0.7528{\tiny $\pm$ 0.0177} & \textbf{1.0273{\tiny $\pm$ 0.0189}} & \underline{1.2081{\tiny $\pm$ 0.0229}} & 0.8945{\tiny $\pm$ 0.0186} & \textbf{1.4810{\tiny $\pm$ 0.0162}} & 1.9662{\tiny $\pm$ 0.0299} \\ \rowcolor{lightgray}
\cellcolor{white}\multirow{-5}{*}{Coat Shopping} & MRDR-CAL(Ours) & \textbf{ {0.7728{\tiny $\pm$0.0025}‡}} & \underline{0.7648{\tiny $\pm$ 0.0190}‡} & 1.0155{\tiny $\pm$ 0.0170} & \textbf{1.2223{\tiny $\pm$ 0.0165}‡} & \textbf{0.8987{\tiny $\pm$ 0.0230}} & \underline{1.4688{\tiny $\pm$ 0.0258}} & \textbf{1.9957{\tiny $\pm$ 0.0395}‡} \\ \midrule
                                & MRDR-GPL &  {0.6617{\tiny $\pm$0.0064}} & 0.5384{\tiny $\pm$ 0.0194} & 0.7369{\tiny $\pm$ 0.0252} & 0.8605{\tiny $\pm$ 0.0211} & 0.6408{\tiny $\pm$ 0.0197} & 1.0657{\tiny $\pm$ 0.0222} & 1.3982{\tiny $\pm$ 0.0241} \\
                                & MRDR-CDR &  {0.6673{\tiny $\pm$0.0035}} & 0.5417{\tiny $\pm$ 0.0162} & 0.7456{\tiny $\pm$ 0.0123} & 0.8698{\tiny $\pm$ 0.0125} & 0.6490{\tiny $\pm$ 0.0202} & \underline{1.0842{\tiny $\pm$ 0.0182}} & 1.4175{\tiny $\pm$ 0.0147} \\
                                & DR-V2 &  {0.6807{\tiny $\pm$0.0026}} & 0.5518{\tiny $\pm$ 0.0125} & 0.7479{\tiny $\pm$ 0.0143} & 0.8732{\tiny $\pm$ 0.0156} & \underline{0.658{\tiny $\pm$ 0.0111}} & 1.0784{\tiny $\pm$ 0.0181} & 1.4154{\tiny $\pm$ 0.0158} \\
                                & ESCM$^2$-DR & \underline{ {0.6828{\tiny $\pm$0.0161}}} & \underline{0.5541{\tiny $\pm$ 0.0144}} & \underline{0.7502{\tiny $\pm$ 0.0126}} & \underline{0.8771{\tiny $\pm$ 0.0171}} & 0.6564{\tiny $\pm$ 0.0156} & 1.0772{\tiny $\pm$ 0.0175} & \underline{1.4183{\tiny $\pm$ 0.0154}} \\ \rowcolor{lightgray}
\cellcolor{white}\multirow{-5}{*}{Yahoo! R3} & MRDR-CAL(Ours) & \textbf{ {0.6988{\tiny $\pm$0.0018}‡}} & \textbf{0.5623{\tiny $\pm$ 0.0092}‡} & \textbf{0.7571{\tiny $\pm$ 0.0066}‡} & \textbf{0.8858{\tiny $\pm$ 0.0072}‡} & \textbf{0.6687{\tiny $\pm$ 0.0103}‡} & \textbf{1.0862{\tiny $\pm$ 0.0080}‡} & \textbf{1.4326{\tiny $\pm$ 0.0090}‡} \\ \midrule
                                & MRDR-GPL &  {0.5401{\tiny $\pm$0.0084}} & 0.4335{\tiny $\pm$ 0.0123} & 0.6446{\tiny $\pm$ 0.0178} & 0.8049{\tiny $\pm$ 0.0143} & 0.5261{\tiny $\pm$ 0.0171} & 0.9792{\tiny $\pm$ 0.0178} & 1.4101{\tiny $\pm$ 0.0146} \\
                                & MRDR-CDR &  {0.5498{\tiny $\pm$0.0054}} & 0.4326{\tiny $\pm$ 0.0098} & 0.6573{\tiny $\pm$ 0.0145} & 0.8297{\tiny $\pm$ 0.0132} & 0.5289{\tiny $\pm$ 0.0126} & 1.0111{\tiny $\pm$ 0.0146} & 1.4739{\tiny $\pm$ 0.0165} \\
                                & DR-V2 &  {0.5765{\tiny $\pm$0.0047}} & 0.4465{\tiny $\pm$ 0.0078} & 0.6795{\tiny $\pm$ 0.0121} & 0.8533{\tiny $\pm$ 0.0098} & 0.5452{\tiny $\pm$ 0.0100} & 1.0459{\tiny $\pm$ 0.0122} & 1.5117{\tiny $\pm$ 0.0123} \\
                                & ESCM$^2$-DR & \underline{ {0.5836{\tiny $\pm$0.0068}}} & \underline{0.4773{\tiny $\pm$ 0.0101}} & \underline{0.7059{\tiny $\pm$ 0.0155}} & \underline{0.8760{\tiny $\pm$ 0.0121}} & \underline{0.5781{\tiny $\pm$ 0.0143}} & \underline{1.0694{\tiny $\pm$ 0.0143}} & \underline{1.5258{\tiny $\pm$ 0.0152}} \\ \rowcolor{lightgray}
\cellcolor{white}\multirow{-5}{*}{KuaiRand} & MRDR-CAL(Ours) & \textbf{ {0.6247{\tiny $\pm$0.0035}‡}} & \textbf{0.4928{\tiny $\pm$ 0.0057}‡} & \textbf{0.7259{\tiny $\pm$ 0.0076}‡} & \textbf{0.8963{\tiny $\pm$ 0.0079}‡} & \textbf{0.5971{\tiny $\pm$ 0.0070}‡} & \textbf{1.0972{\tiny $\pm$ 0.0109}‡} & \textbf{1.5549{\tiny $\pm$ 0.0119}‡} \\ \bottomrule
\end{tabular}}
\caption{The comparison with the SOTA methods, ‡ indicates statistically significant improvements over the ESCM$^2$-DR method at p $<$ 0.05 level.}
\label{tb:sota}
\end{table*}

\section{Related Works}

\subsection{Approaches to CVR Estimation}

In practical applications, CTR prediction models are often adapted for CVR prediction tasks due to their conceptual similarities. These approaches encompass various methods, including logistic regression-based models \cite{Richardson2007PredictingCE}, factorization machine-based models \cite{Juan2016FieldawareFM, Rendle2010FactorizationM}, and deep learning-based models \cite{Cheng2016WideD, Wang2017DeepC, Guo2017DeepFMAF}. Moreover, several techniques specifically address unique challenges in CVR prediction, such as delayed feedback \cite{Chapelle2014ModelingDF, Su2020AnAM}, data sparsity \cite{Ma2018EntireSM, Wen2019EntireSM}, and selection bias \cite{guo2021enhanced, Zhang2019LargescaleCA}. This paper focuses primarily on mitigating selection bias issues.

\subsection{Recommendation with Selection Bias}

Bias in recommendation systems is a significant concern in current research \cite{chen2023bias, zhao2022popularity, wang2021samwalker++}, impacting the fairness and diversity of recommendations.

Selection bias, particularly missing-not-at-random, is common in recommender systems where feedback is observed only for displayed user-item pairs \cite{marlin2009collaborative, sato2020unbiased, yang2015boosting}. To mitigate this bias, the inverse propensity score (IPS) approach \cite{schnabel2016recommendations, swaminathan2015self} re-weights observed samples using inverse displayed probabilities. However, IPS estimators often suffer from high variance \cite{gilotte2018offline}, which can be mitigated by self-normalized inverse propensity score (SNIPS) estimators \cite{schnabel2016recommendations}. 

Note that the inherent nature of selection bias is that the data is missing not at random. A straightforward solution for selection bias is to impute the missing entries with pseudo-labels, aiming to make the observed data distribution $p(u, i| o=1)$ resemble the ideal uniform distribution $p(u, i)$. For instance, \cite{steck2010training, steck2013evaluation} propose a light imputation strategy that directly assigns a specific value to missing data. However, since these imputed ratings are heuristic, such methods often suffer from empirical inaccuracies, which can propagate into the training of recommendation models, resulting in sub-optimal performance.

Doubly Robust (DR) estimators \cite{jiang2016doubly, wang2019doubly} simultaneously account for imputation errors and propensities to reduce variance in IPS. Recent improvements include asymmetric tri-training \cite{saito2020asymmetric}, information theory considerations \cite{wang2020information}, adversarial training \cite{xu2020adversarial}, enhanced doubly robust estimators \cite{guo2021enhanced}, knowledge distillation \cite{xu2022ukd}, bias-variance trade-off \cite{dai2022generalized}, and multi-task learning \cite{wang2022escm}.  {DR-V2~\cite{pmlr-v202-li23ah} proposes balanced-mean-squared-error metric for joint propensity and CVR estimation.  \cite{zhu2020unbiased}presents a novel combinational joint learning framework that simultaneously learns unbiased user-item relevance and propensity estimation to improve the accuracy of implicit recommender systems.\cite{xu2022dually} leverages both user and item perspectives to estimate propensity scores, addressing biases in sequential recommendation systems. \cite{su2024ddpo}  introduces DDPO, a framework that mitigates sample selection bias in post-click conversion rate estimation by optimizing models with both clicked and unclicked samples in the impression space.}

Some approaches rely on small amounts of randomly unbiased data \cite{bonner2018causal, yuan2019improving, chen2021autodebias}, which can be costly in real-world applications.

Existing models often face challenges in calibrating propensity score estimations, leading to inaccuracies in debiasing methods like IPS and DR. Addressing these calibration issues is the primary focus of this paper.


\subsection{Uncertainty Calibration and Quantification}

Beyond Platt scaling, temperature scaling has been proposed for uncertainty calibration in multi-class classification \cite{guo2017calibration}. Platt scaling and temperature scaling both assume a Gaussian distribution. For distributions that are richer and more skewed, Beta calibration is another effective method \cite{kull2017beta}. 
In addition to parametric methods that assume distributional assumptions, non-parametric techniques can also be considered. These include histogram binning \cite{zadrozny2001obtaining} and isotonic regression \cite{zadrozny2002transforming}.

From a Bayesian generative model perspective, this paper focuses on approximate methods like MC Dropout and Deep Ensembles for uncertainty quantification in IPS. While Gaussian processes, Bayesian neural networks, and other probabilistic graphical models can also quantify uncertainty in IPS \cite{zhu2017big,zhu2015bayes}, practical approximate inference on large-scale datasets poses significant challenges \cite{liao2020uncertainty, mccandless2009bayesian}. 
Different methods can excel in specific scenarios, but each has its limitations. A summary of common drawbacks for various approaches is deferred to Section V in the Supplemental Materials.

\section{Conclusions}
 {
This paper introduces Expected Calibration Error (ECE) as a novel metric to evaluate the reliability of propensity scores, shedding light on the prevalent issue of uncertainty miscalibration in recommendation systems where data is missing not at random (MNAR). To address this challenge, we propose uncertainty calibration techniques for propensity score estimation and systematically compare three calibration approaches.}

 {
Through theoretical analysis, we demonstrate that calibrated Inverse Propensity Scores (IPS) reduce bias, leading to more reliable debiasing in recommendation tasks. Extensive experiments on three benchmark datasets, Coat Shopping, Yahoo! R3 and KuaiRand, validate the effectiveness of our methods, showing that calibrated propensity scores significantly enhance recommendation accuracy.
}

 {
These findings emphasize the critical role of addressing propensity score miscalibration in improving both bias mitigation and the overall performance of recommendation systems. This work highlights a promising direction for incorporating uncertainty calibration to ensure more robust and fair recommendations.
}

\bibliography{refs}
\bibliographystyle{ieeetr}

\section{Biography Section}
\vspace{-35pt}
\begin{IEEEbiography}[{\includegraphics[width=1in,height=1.25in,clip,keepaspectratio]{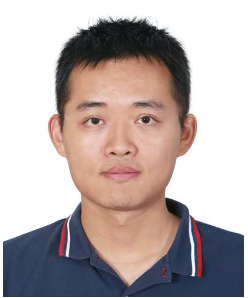}}]{Wenbo Hu}
 received his PhD from Tsinghua University in 2018. From 2018 to 2020, he worked as a postdoctoral researcher at Tsinghua University. He is currently an associate professor at Hefei University of Technology. He has published more than 20 outstanding conference and journal papers in his research areas, including multi-modal pre-training large models, AI against attack and defense, and AI uncertainty prediction. %
\end{IEEEbiography}

\vspace{-30pt}
\begin{IEEEbiography}[{\includegraphics[width=1in,height=1.25in,clip,keepaspectratio]{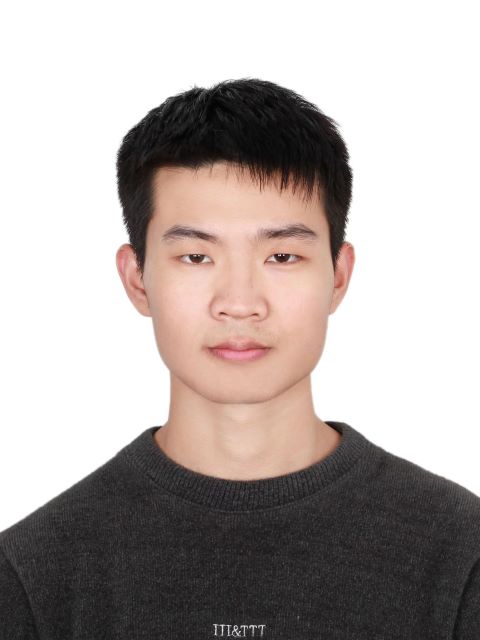}}]{Xin Sun}
is a joint Ph.D. candidate from University of Science and Technology of China(USTC) and Institute of Automation, Chinese Academy of Sciences(CASIA). He received his bachelor degree from Shanghai Jiao Tong University(SJTU). His current research interests mainly include trustworthy learning and information retrieval.
\end{IEEEbiography}

\vspace{-20pt}
\begin{IEEEbiography}[{\includegraphics[width=1in,height=1.25in,clip,keepaspectratio]{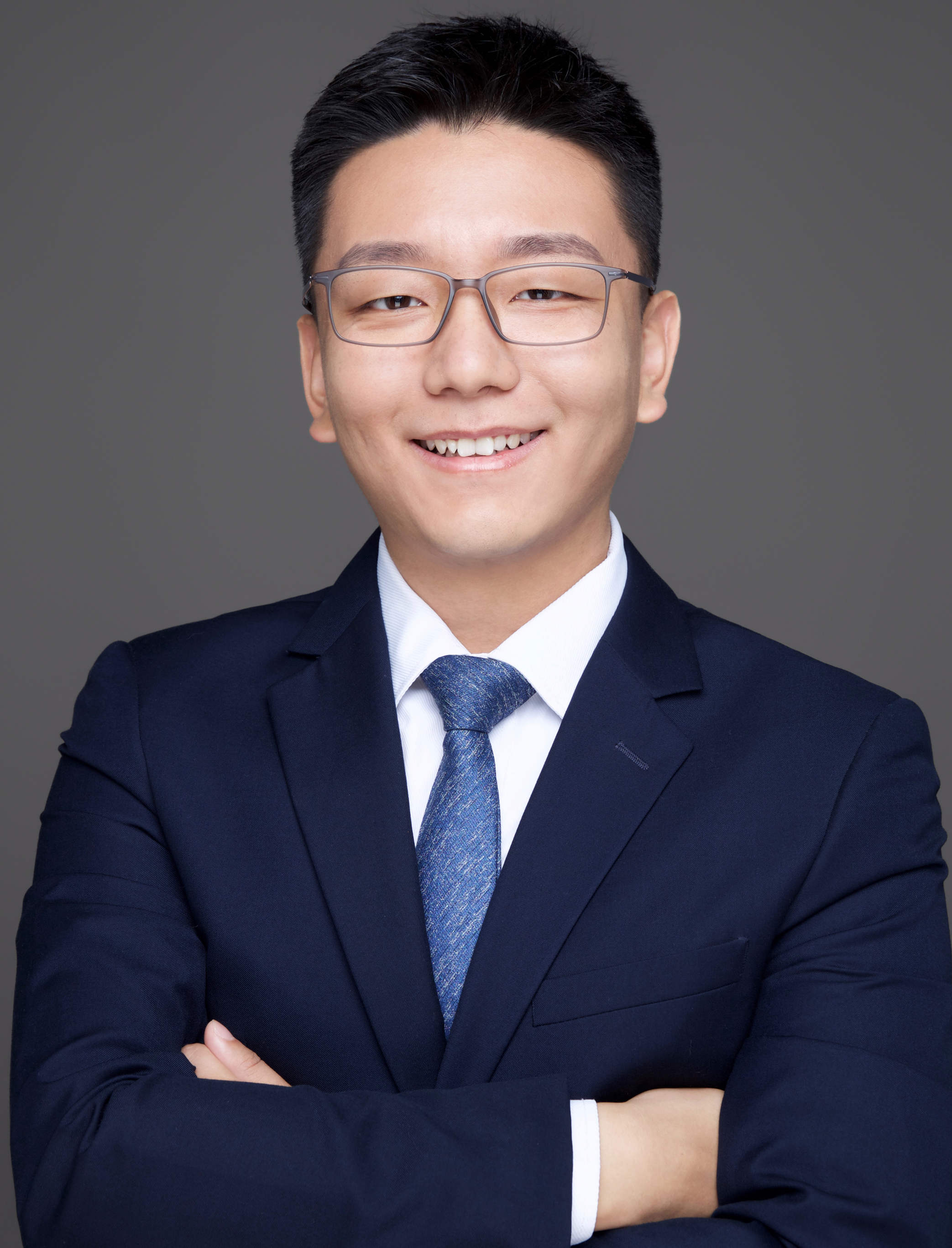}}]{Qiang Liu}
is an Associate Professor with the Center for Research on Intelligent Perception and Computing (CRIPAC), State Key Laboratory of Multimodal Artificial Intelligence Systems (MAIS), Institute of Automation, Chinese Academy of Sciences (CASIA). He received his
PhD degree from CASIA. Currently, his research interests include data mining, misinformation detection, LLM safety and AI for science. He has published papers in top-tier journals and conferences, such as IEEE TKDE, AAAI, NeurIPS, KDD, WWW, SIGIR, CIKM, ICDM, ACL and EMNLP.
\end{IEEEbiography}

\vspace{-20pt}
\begin{IEEEbiography}[{\includegraphics[width=1in,height=1.25in,clip,keepaspectratio]{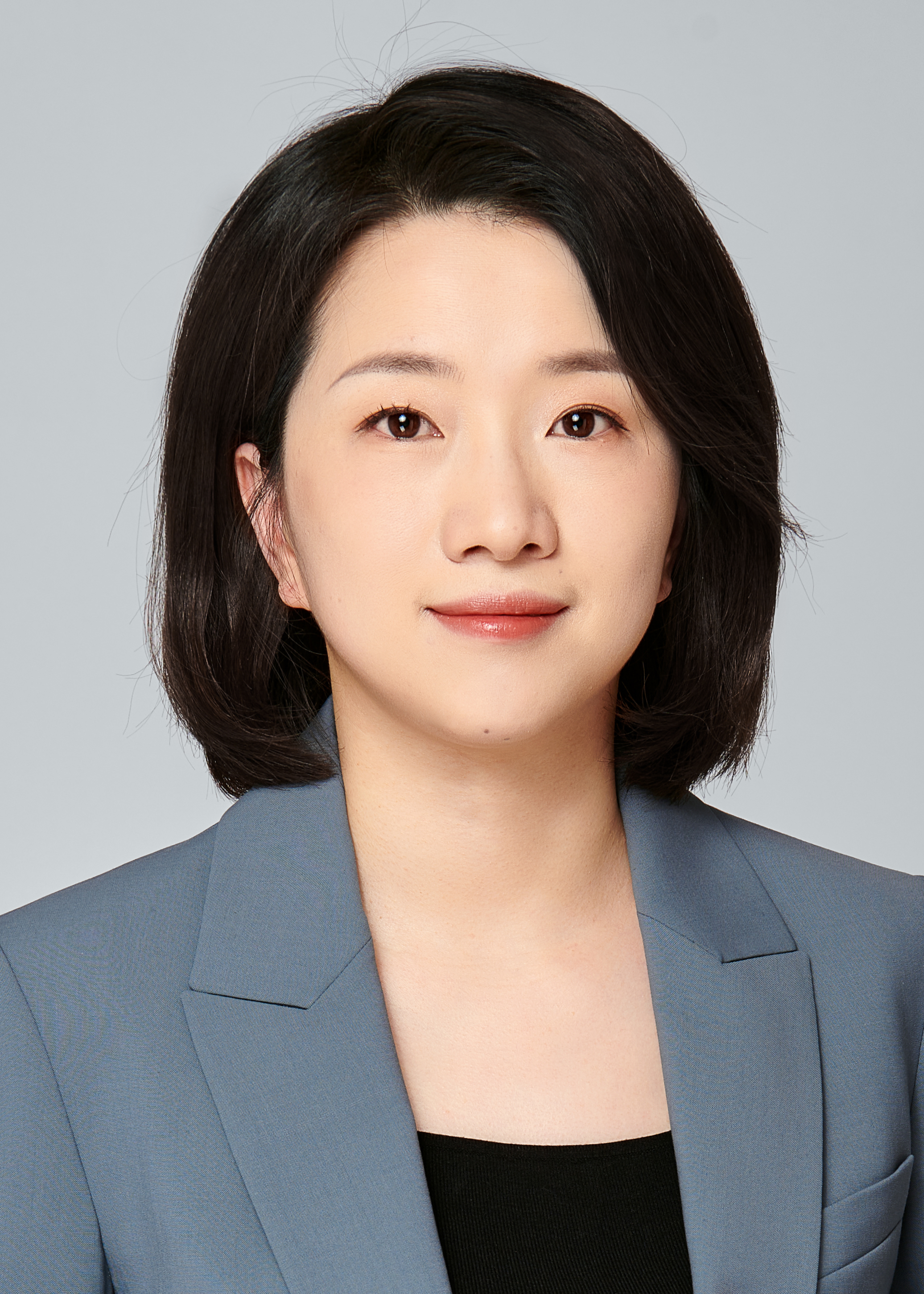}}]{Le Wu} is currently a professor at the Hefei
University of Technology (HFUT), China. She
received her Ph.D. degree from the University
of Science and Technology of China (USTC).
Her general area of research interests are data
mining, recommender systems, and social network analysis. She has published more than
50 papers in referred journals and conferences,
such as IEEE TKDE, SIGIR, WWW, and AAAI.
2015 Award, and the Distinguished Dissertation
Award from the China Association for Artificial Intelligence (CAAI) 2017.
\end{IEEEbiography}

\vspace{-20pt}
\begin{IEEEbiography}[{\includegraphics[width=1in,height=1.25in,clip,keepaspectratio]{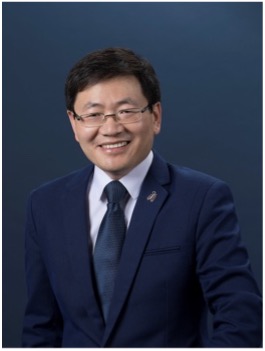}}]{Liang Wang}
received both the BEng and MEng degrees from Anhui University in 1997 and 2000, respectively, and the PhD degree from the Institute of Automation, Chinese Academy of Sciences (CASIA) in 2004. 
Currently, he is a full professor of the Hundred Talents Program at the State Key Laboratory of Multimodal Artificial Intelligence Systems, CASIA. 
His major research interests include machine learning, pattern recognition, and computer vision. He has widely published in highly ranked international journals such as IEEE TPAMI and IEEE TIP, and leading international conferences such as CVPR, ICCV, and ECCV. 
He has served as an Associate Editor of IEEE TPAMI, IEEE TIP, and PR. He is an IEEE Fellow and an IAPR Fellow.
\end{IEEEbiography}

\clearpage
\onecolumn
{\center \Large Supplemental Materials  for \\
{\Large Uncertainty Calibration for Counterfactual Propensity Estimation in Recommendation}}
\section{Summary of Main Symbols}
Table~\ref{tab:symbols} describes the main symbols used in this paper.
\label{appendix:symbols}
\begin{table}[htbp]
    \centering
    \begin{tabular}{cl}
    \toprule
    Symbol & Description \\
    \midrule
    $\mathcal{U}$  & Users set  \\
    $\mathcal{I}$  & Items set \\
    $\mathbf{R}$ &  conversion matrix \\
    $\mathbf{\hat{R}}$ & predicted conversion matrix \\
    $\mathcal{O}$ & click label matrix \\
    $\mathbf{R^{o}}$ & observed conversion matrix \\
    $\mathcal{D}$ & user-item pairs space \\
    $\mathbf{E}$ & prediction error matrix \\
    $\mathbf{\hat{E}}$ & imputed error matrix \\
    $\mathcal{P}$ & propensity scores matrix \\
    $\mathcal{\hat{P}}$ & estimated propensity scores matrix \\
    $g_\phi$ & propensity estimation model \\
    $f_\theta$ & CVR prediction model \\
    \bottomrule

    \end{tabular}
    \caption{The summary of main symbols used in this paper.}
    \label{tab:symbols}
\end{table}

\begin{figure*}[b]
    \centering
    \includegraphics[width=1.0\linewidth]{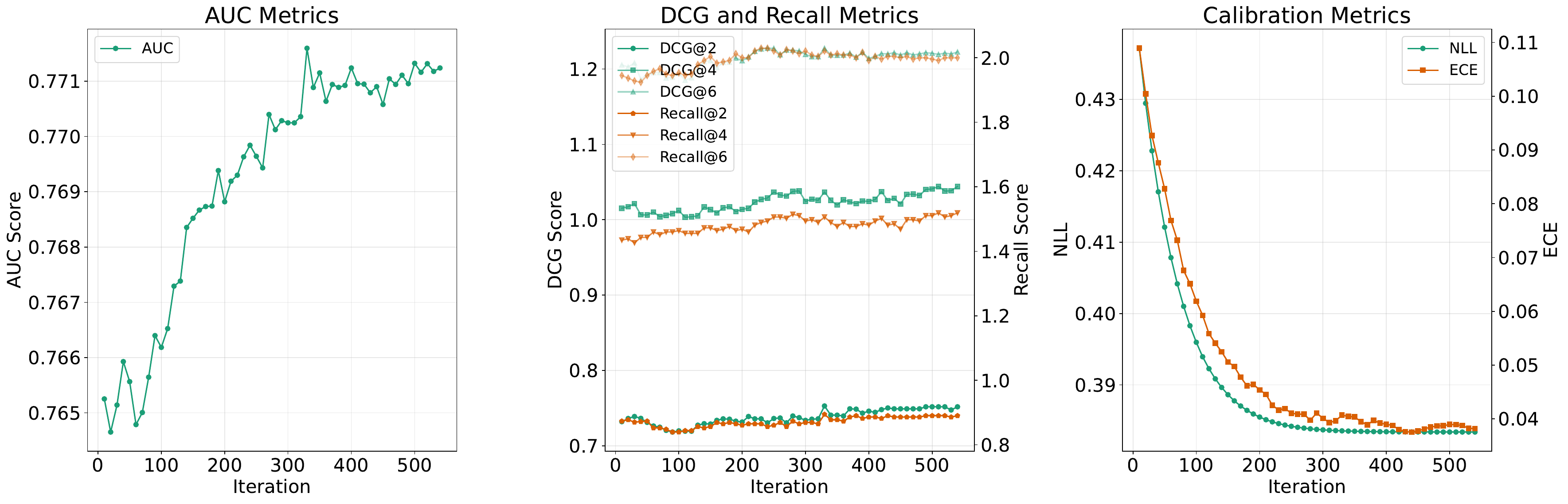}
    \caption{The relationship between ECE and recommendation metrics.}
    \label{fig:calib-metric}
\end{figure*}

\section{Model implementation}
\label{appendix:implementation}
We implement all models with Pytorch and optimize them with adam optimizer. We first determine the hyper-parameters for NeuMF(backbone) based on grid search, and the search range for the embedding size, batch size, learning rate, L2 regularization coefficient are set as {16, 32, 64, 128, 256}, {256, 512, 1024, 2048}, {5e-5, 1e-4, 5e-4, 1e-3, 5e-3, 1e-2} and {1e-5, 5e-5, 1e-4, 5e-4, 1e-3, 5e-3} respectively. The best configuration for each mothod is determined based on the ranking performance on the validation set. The search results is as follows: the embedding size, batch size, learning rate, dropout rate and  L2 regularization coefficient  are set to 64, 1024, 0.001, 0.2 and 1e-4 respectively. The structure of the MLP layers in NeuMF is set to [64, 32, 16]. Other models, including IPS, DR-JL, MRDR, CDR, GPL, ESCM$^2$ and DR-V2, are all built upon NeuMF. They use the same hyper-parameter settings as the baseline NeuMF for common hyper-parameters.

{For evaluating recommendation results, we employ three metrics: AUC,  discount cumulative gain (DCG) and Recall~\cite{guo2021enhanced}}.
DCG and Recall are defined as:
\begin{eqnarray}
    \label{eqn:dcg}
    {\rm DCG}(K) = \sum_{k=1}^K \frac{Rel_{k}}{\log_2 (k+1)}, \\
    \label{eqn:recall}
    {\rm Recall}(K) = \sum_{k=1}^K Rel_{k},
\end{eqnarray}
where $k$ represents the ranking order, $K$ is a hyperparameter of the DCG metric, and $Rel_k$ is a binary indicator indicating whether the $k$-th sample is a positive sample.

\section{The relationship between calibration error and recommendation results.}
{In this section, we have conducted a detailed analysis of how improved calibration impacts recommendation metrics. Specifically, we utilized Platt scaling as a post-hoc calibration method to adjust propensity scores. Platt scaling optimizes the negative log-likelihood (NLL) loss using the LGFBS optimizer, where a reduction in NLL is accompanied by a corresponding decrease in Expected Calibration Error (ECE).\\
To demonstrate the relationship between ECE and recommendation performance, we set checkpoints every 10 epochs during training, saving the propensity scores, ECE, and NLL values at each point. We then trained an Inverse Propensity Scoring (IPS) model on the Coat dataset using these saved propensity scores to evaluate recommendation metrics, including AUC, DCG, and Recall. The results, as illustrated in the figure~\ref{fig:calib-metric}, clearly show a strong negative correlation between ECE and recommendation performance: as ECE decreases, metrics such as AUC, DCG, and Recall exhibit significant improvement.\\
This empirical evidence supports our claim that better-calibrated propensity scores (lower ECE) lead to superior recommendation outcomes, thereby providing a clearer and more compelling link between calibration quality and CVR prediction enhancement. Hence, our findings affirm that reducing ECE improves IPS predictions and overall recommendation performance.
}

\section{Calibration Curve and Propensity Histogram of Calibrated Propensity scores on the Yahoo! R3 Shopping Dataset}
\begin{figure}[htbp!]
    \centering
    \subcaptionbox{Calibration Curve}{\includegraphics[width=0.23\textwidth]{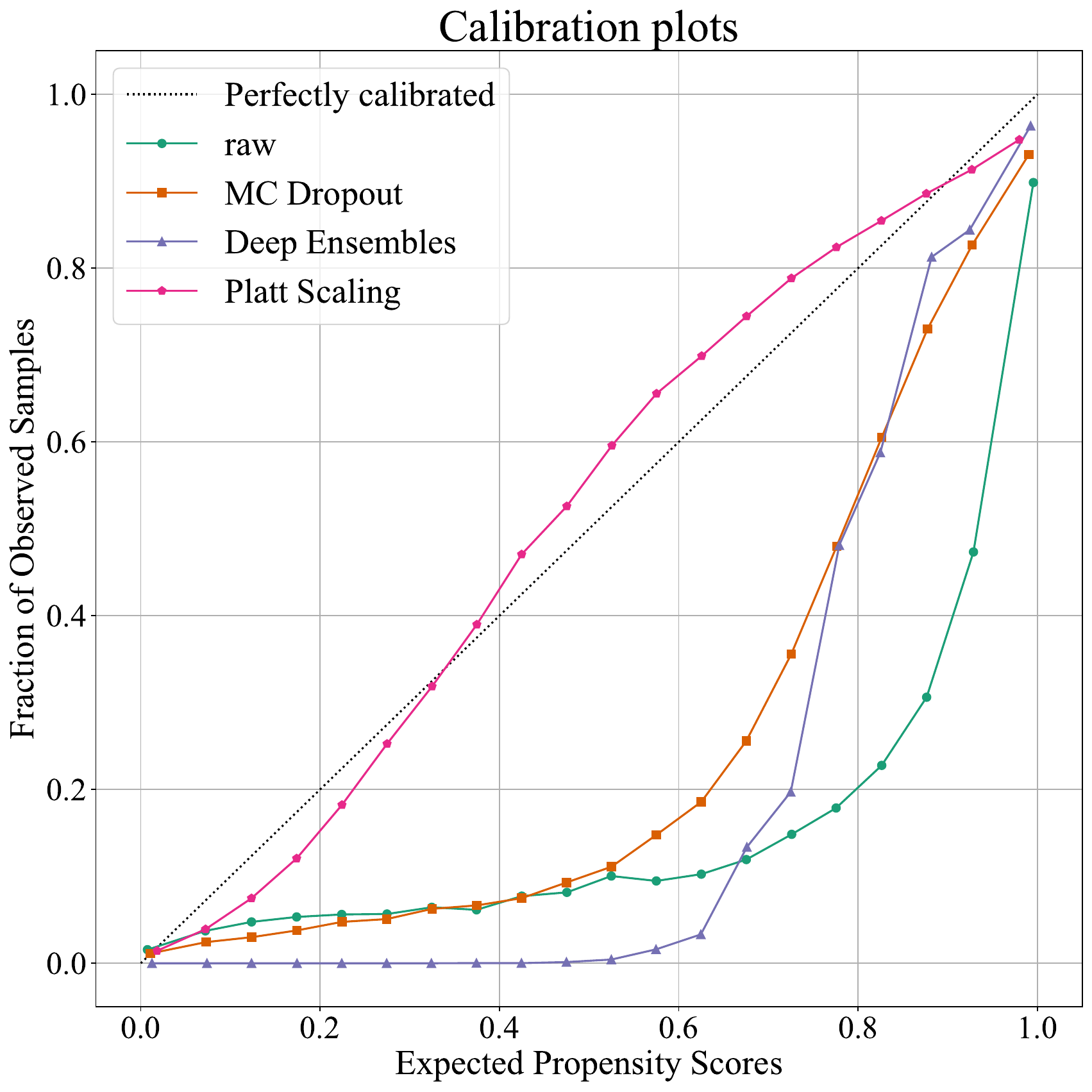}}
    \subcaptionbox{propensity scores Histogram}{\includegraphics[width=0.23\textwidth]{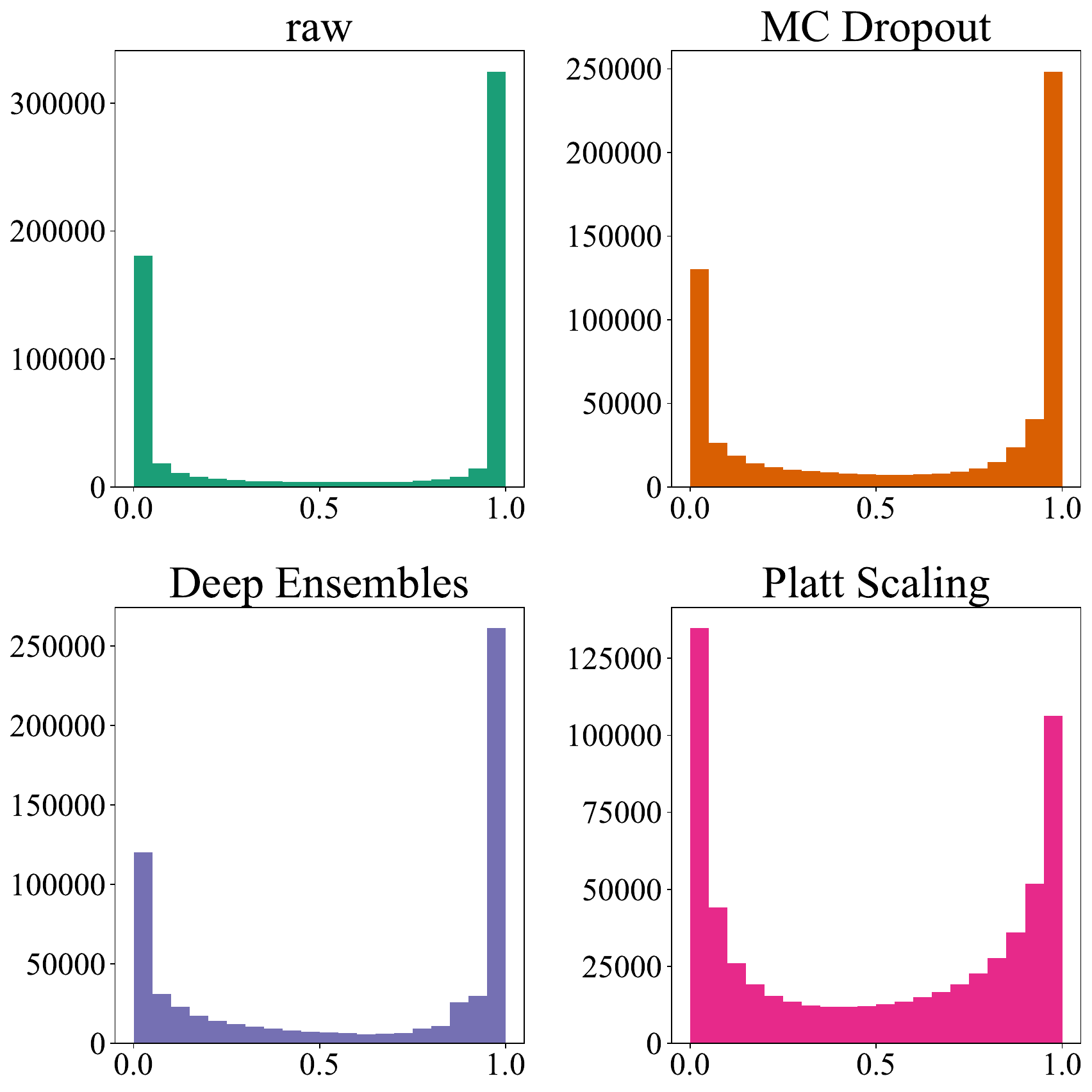}}
    \caption{Calibration Curve and Propensity Histogram of Calibrated Propensity scores on the Yahoo! R3 Shopping Dataset}
    \label{fig:yahoo-calibration}
\end{figure}

\section{Shortcomings of Each Calibration Method}
{Existing methods to address selection bias in recommender systems include deep ensembles, Platt scaling, and Monte Carlo Dropout. However, each comes with notable shortcomings:}
\begin{enumerate}
    \item {{Deep ensembles}, while providing robust uncertainty estimates, are \textit{computationally expensive} due to the necessity of training multiple models \cite{lakshminarayanan2017simple}. Hence the BatchEnsembles method can be used to reduce the overall computation cost as detailed in~\cite{wen2019batchensemble}.}
    \item {{Platt scaling} requires a \textit{separate validation set} to fine-tune its parameters, which can be a limitation in scenarios with limited data availability \cite{platt1999probabilistic}.}
    \item {{Monte Carlo Dropout} offers a practical approach to approximate Bayesian inference but can lead to \textit{unstable calibration performance}, particularly sensitive to the choice of dropout rate and the architecture of the underlying neural network \cite{gal2016dropout}.}
    \item Dual Focal Loss effectively addresses class imbalance and hard-to-classify examples but has notable shortcomings. Dual Focal Loss can overfit noisy data by overly focusing on mislabeled or ambiguous examples. It requires tuning of hyperparameters like the focusing factor, adding complexity to training~\cite{tao2023dual}.
\end{enumerate}

\vfill
\end{document}